\def\argmax{\mathop{\rm arg\,max}}
\newtheorem{definition}{Definition}[section]
\newtheorem{remark}{Remark}[section]
\newtheorem{theorem}{Theorem}[section]
\newtheorem{lemma}{Lemma}[section]
\newtheorem{corollary}{Corollary}[section]
\newtheorem{proposition}{Proposition}[section]
\def\noofprod{n}
\def\noofset{N}
\newcommand{\alsh}{\textsc{Assort-MNL(BZ)}}
\newcommand{\aheu}{\textsc{Assort-MNL(Approx)}} 
\newcommand{\ann}{\textsc{Assort-MNL}}
\DeclareMathOperator*{\apargmax}{\mathit{approx \ arg\,max}}
\newcommand{\cS}{\mathcal{S}}
\newcommand{\bv}{\textbf{v}}
\newcommand{\bz}{\textbf{z}}
\newcommand{\bp}{\textbf{p}}
\newcommand{\bu}{\textbf{u}}
\newcommand{\ba}{\textbf{a}}
\newcommand{\revsv}{R_{\bv}(S)}
\newcommand{\revstarv}{R_{\bv}(S^*)}
\newcommand{\revshatv}{R_{\bv}(\hat{S})}
\newcommand{\rev}{R}
\newcommand{\approxsol}{\hat{ \bz}^{\tilde{S}} } 
\newcommand{\exactsol}{ \bz^{\tilde{S}} } 
\newcommand{\failprob}{P_e}
\newcommand{\cnn}{$(c,\failprob)$-NN}
\newcommand{\nonenn}{$(1+\nu,\failprob)$-NN}
\newcommand{\nmips}{$(\nu,\failprob)$-MIPS}
\newcommand{\revision}[1]{\textcolor{black}{#1}} 
\journal{}
\begin{document}

\begin{frontmatter}

\title{Optimizing Revenue while showing Relevant Assortments at Scale}

\author[mymainaddress]{Theja Tulabandhula\corref{mycorrespondingauthor}}
\cortext[mycorrespondingauthor]{Corresponding author}
\ead{tt@theja.org}

\author[mysecondaryaddress]{Deeksha Sinha}
\ead{deeksha@mit.edu}

\author[mymainaddress]{Saketh Karra}
\ead{skarra7@uic.edu}

\address[mymainaddress]{Information and Decision Sciences, University of Illinois at Chicago, IL 60607, USA}
\address[mysecondaryaddress]{Operations Research Center, Massachusetts Institute of Technology, MA 02139, USA}

\begin{abstract}
Scalable real-time assortment optimization has become essential in e-commerce operations due to the need for personalization and the availability of a large variety of items. While this can be done when there are simplistic assortment choices to be made, the optimization process becomes difficult when imposing constraints on the collection of relevant assortments based on insights by store-managers and historically well-performing assortments. We design fast and flexible algorithms based on variations of binary search that find the (approximately) optimal assortment in this difficult regime. In particular, we revisit the problem of large-scale assortment optimization under the multinomial logit choice model without any assumptions on the structure of the feasible assortments. We speed up the comparison steps using advances in similarity search in the field of information retrieval/machine learning.  For an arbitrary collection of assortments, our algorithms can find a solution in time that is sub-linear in the number of assortments, and for the simpler case of cardinality constraints - linear in the number of items (existing methods are quadratic or worse). Empirical validations using a real world dataset (in addition to experiments using semi-synthetic data based on the Billion Prices dataset and several retail transaction datasets) show that our algorithms are competitive even when the number of items is $\sim 10^5$ ($10\times$ larger instances than previously studied).
\end{abstract}

\begin{keyword}
Data driven recommendations; assortment planning; multinomial logit model; nearest neighbor search; scalability.
\end{keyword}

\end{frontmatter}


\section{Introduction}
\label{sec:introduction}

Assortment optimization~\citep{kok2008assortment,rossi2012bayesian} is the problem of showing an appropriate subset (assortment) of recommendations (or items) to a buyer taking into account their choice (or purchase) behavior, and is a key problem studied in the revenue management literature. There are essentially two aspects that define this problem: (a) the purchase behavior model of the buyer, and (b) the metric that the decision-maker wishes to optimize. Intuitively, the subset shown to the buyer impacts their purchase behavior, which in turn impacts the seller's desired metric such as conversion or revenue. Both the offline~\citep{Vel} and online~\citep{shipra} optimization settings have a wide variety of applications in retail, airline, hotel and transportation industries among others. Many variants of the problem have been extensively studied (hence, we only cite a few representative works) and is an integral part of multiple commercial offerings. 
In this paper, we are interested in advancing the algorithmic approaches for assortment optimization that can enable real-time personalized optimization at scale. Thus, we focus on designing algorithms that are \emph{computationally efficient} as well as \emph{data-driven}. 

To motivate scalability, consider how global e-commerce firms like Flipkart, Amazon or Taobao/TMall display items~\citep{feldman2018taking}. Every aspect of the page displayed to a customer is broken down into modular pieces with different teams responsible for delivering different functionalities.
The assortment/recommendations team may have a budget of at most hundreds of milliseconds 
to display the best possible assortment given the current profile of the customer. \revision{Such stringent requirements necessitate using scalable assortment planning algorithms. Unfortunately, there is very limited research that has explored this gap. For instance, ~\cite{Vel} devise methods that optimize assortments using the Benders decomposition technique in minutes, and demonstrate moderate scalability with a real product line instance (of over 3000 products). There is still an overwhelming need for more approaches in this direction, which is where our proposed methods seek to fill the gap (solving $10\times$ larger instances, with an appropriately chosen choice model, see Section~\ref{sec:experiments})}.

\revision{The second key requirement in assortment planning is that optimization approaches should work well with different types of constraints on the set of feasible assortments. For instance, frequent itemsets~\citep{borgelt2012frequent} discovered using transaction logs can readily give us a collection of high quality candidate \emph{data-driven} assortments, which can then lead to computationally difficult optimization problem instances. In other words, while they readily help us isolate items and bundles that are frequently purchased by customers, optimizing over them at scale is a difficult challenge.  Another prominent source that give rise to unstructured feasible regions are business rules (e.g., rules that dictate that some items should not be displayed next to each other).}
Store managers typically curate arbitrary assortments based on domain knowledge and other exogenous constraints. Unfortunately, none of the existing algorithms (including methods for integer programming) work well when these sets are not compactly represented (by compact, we mean a polynomial-sized description of the collection of feasible assortments; for instance, a polytope in $\noofprod$ dimension with at most a polynomial number of facets, where $\noofprod$ is the number of items).

In this work, we focus on one popular parametric single purchase behavior model, namely the multinomial logit (MNL) model, and provide algorithms to maximize expected revenue (assuming fixed known prices) that are both scalable as well as capable of handling an arbitrary collection of assortment candidates. These algorithms, namely \ann,  \aheu \ and \alsh, build on: (a) (noisy) binary search, and (b) make use of efficient data structures for similarity search that is used to solve the comparison steps, both of which have not been used for any type of assortment optimization before. While the use of the MNL model to capture customer behavior may seem restrictive here, they can be made quite flexible by making the underlying parameters functions of rich customer profile information (e.g., these functions can be neural networks). Our work adds to the initial success of MNL based approaches at Alibaba~\citep{feldman2018taking}.

The consequence of these choices is that we can solve problem instances with \emph{extremely general specification of candidate assortments} in a \emph{highly scalable manner}. In particular, by leveraging recent advances in similarity search, a subfield of machine learning/information retrieval, our algorithms can solve fairly large instances ($\sim 10^5$ items or assortments) within reasonable computation times (within seconds, see Section~\ref{sec:experiments}) even when the collection of feasible assortments have no compact representations. This allows store managers to seamlessly add or delete assortments/items, while still being able to optimize for the best assortments to show to the customers. Further, in the capacity-constrained assortment setting, not only do our methods become more time (specifically, \textit{linear} in the number of items) and memory efficient in theory, they are also better empirically as shown in our experiments. 

\revision{
To summarize, the key contributions of this work are:
\begin{itemize}
\item We develop new binary-search based algorithms (\ann,  \aheu \ and \alsh) to solve the assortment optimization problem over arbitrary (data-driven) feasible regions at scale, and provide explicit time-complexity and correctness guarantees (see Table~\ref{tab:cap-summary}).
\item We reduce the original maximization problem into a series of decision problems within a binary search loop, and develop similarity search techniques to solve the decision problems efficiently.
\item We address approximation and incorrectness in solving the comparison step decision problems explicitly in \aheu \ and \alsh{} respectively. In particular, the latter relies on a Bayesian update of the binary search interval, and we provide a high probability guarantee in being able to obtain near optimal revenue. 
\item Our approaches are the first to connect frequent itemset mining (and its variants that mine customer purchase/interaction patterns) to the  problem of recommending/assortment planning.
\item We extensively validate the scalability of our methods, under both the general assortment setting as well as the well known capacitated setting. These experiments are carried out using a real world dataset \citep{taFeng}, as well as semi-synthetic instances created using prices from the Billion Prices dataset~\citep{IAH6Z6_2016} and multiple real world transaction logs~\citep{borgelt2012frequent}. 
\end{itemize}
}

Thus, our work addresses the gap of practical assortment planning at Internet scale (where we ideally seek solutions in 100s-1000s of milliseconds) and is complementary to works such as~\cite{Vel}, which focus on richer choice models (e.g., the distribution over rankings model, Markov chain model etc.) that lead to slower integer programming approaches. This latter class of solutions is also limited to instances where the sets can be efficiently described by a polytope. Fixing the choice model to be the MNL model allows our algorithms to scale to practical instances, especially those arising in the Internet retail/e-commerce settings as described above. 

\begin{table}[]
    \begin{center}
	\resizebox{\textwidth}{!}{
	\begin{tabular}{cccccc}
	\hlineB{2}
    \textbf{Work}& Constraints & Approach & Guarantee & Instance Solved \\
	\hlineB{2}
	- & General & Exhaustive (exact) & $O(nN)$ & 50000 sets \\
	{(this work)} & General & \aheu \ (bin. search, $\epsilon$-opt) & $O\left(\noofprod \noofset^{\rho}  \log \frac{1}{\epsilon} \right)$ & 50000 sets\\
	{(this work)} & General & \alsh \ (bin. search,  $\epsilon$-opt) & $O\left(\noofprod \noofset^{\rho}  \log \frac{1}{\epsilon} \right)$ &  50000 sets \\
	\scriptsize{\citep{Paat}} & Capacitated & \textsc{Static-MNL} (specialized, exact) & $O(n^2\log n)$ & 200 items\\
	\citep{sumida2020revenue} & Capacitated & Linear Programming (exact) & $O(n^{3.5})$ & -\\
	\citep{jagabathula2014assortment} & Capacitated & ADXopt (greedy, exact) & $O(n^2)$ & 15 items\\
	{(this work)} & Capacitated & \ann \ (bin. search,  $\epsilon$-opt) & {$O\left(n\log \frac{1}{\epsilon}\right)$} & {15000} items \\
	\hlineB{2}
	\end{tabular}
	}
	\end{center}
    \caption{Algorithmic approaches for assortment planning under the MNL model, optimizing under arbitrary collection of assortments as well as under capacity constraint. Here, $\rho < 1$, $\epsilon < 1$, $\noofprod$ is the number of items, and $\noofset$ is the number of relevant feasible assortments.}
    \label{tab:cap-summary}
\end{table}{}

The rest of the paper is organized as follows. In Section~\ref{sec:preliminaries}, we describe some preliminary concepts. Our proposed algorithms are in Section~\ref{sec:optimize}, whose performance we empirically validate in Section~\ref{sec:experiments}. Finally, Section~\ref{sec:conclude} presents some concluding remarks and avenues for future work. Several appendices that support these sections with additional details are also provided.

\section{Preliminaries}
\label{sec:preliminaries}
\vspace{2mm}

\revision{ In this section, we introduce the reader to: (a) the assortment planning problem, (b) frequent itemsets as a source for defining relevant feasible assortments to optimize over, and (c) the similarity search problem that allows our algorithms in Section~\ref{sec:optimize} to scale. }

\subsection{Assortment Planning}

The assortment planning problem involves choosing an assortment among a set of feasible assortments ($\mathcal{S}$) that maximizes the expected revenue (note that we use price and revenue interchangeably throughout the paper). Without loss of generality, let the items be indexed from $1$ to $\noofprod$ in the decreasing order of their prices, i.e., $p_1 \geq p_2 \geq \cdots p_{\noofprod}$. Let $\rev(S) = \sum_{l \in S} p_l \times \mathbb{P}(l|S)$ denote the (expected) revenue of the assortment $S \subseteq \{1,...,\noofprod\}$. Here $\mathbb{P}(l|S)$ represents the probability that a user selects item $l$ when assortment $S$ is shown to them and is governed by a single-purchase choice model.

The expected revenue maximization problem is simply: $\max_{S \in \mathcal{S}} \rev(S)$. In the rest of the paper, we focus on the multinomial logit (MNL) model~\citep{luce1960individual} with parameters represented by a vector $\mathbf{v} = \left(v_0, v_1, \cdots v_{\noofprod}\right)$ with $0 \leq v_i \leq 1 \;\;\forall i$. Parameter $v_i, \ 1\leq i \leq \noofprod$, captures the preference of the user for purchasing item $i$. For this model, it can be shown that $\mathbb{P}(l|S) = v_l/(v_0 + \sum_{l' \in S} v_{l'})$.

\subsection{Sources of Relevant Feasible Assortments}

\revision{ There are many sources that can lead to a collection of feasible assortments, with the collection not having a simple enough description (such as being polygonal) for efficient optimization. Here we discuss one such source, namely frequent itemsets, and omit discussing other sources (such as business rules). }

Frequent itemset mining~\cite{han2007frequent} is a well-known data mining technique to estimate statistically interesting patterns from datasets in an unsupervised manner. They were originally designed to analyze retail datasets, in particular, to summarize certain aspects of customer co-purchase behavior with minimal modeling assumptions. Such purchase patterns, for instance, of which items are commonly bought together, can help retailers optimize tasks such as pricing, store design, promotions and inventory planning (their use in assortment planning is novel to this work). If $\noofprod$ is the number of items, then any non-empty subset $I$ of these items is called an itemset. A retail transaction records (near-)simultaneous purchase of items by customers. Let the number of such records be $D>0$ and the corresponding itemsets be $I_1,...,I_D$. A transaction record itemset $I_i$ supports a given itemset $J$ if $ J \subseteq I_i$. The collection of transaction itemsets that support $J$ allow us to define a score for $J$, which is the \emph{support} of $J$: $\textrm{support}(J) = \sum_{i=1}^{D}\mathbf{1}[J \subseteq I_i]/D$.

Itemset $J$ is called a frequent itemset if $\textrm{support}(J) \geq t$, where $t \in (0,1]$ is a pre-defined threshold. Intuitively, the support of the set $J$ can be thought of as the empirical probability of a customer purchasing that set of items. In our setup, computation of such itemsets using support and a myriad of other criteria is part of the pre-processing stage. These sets/assortments are completely data-driven and complement prior-knowledge based assortment candidate designs. Informally, assortments constructed using itemsets contain items that have historically been bought frequently, so there may be a \emph{higher chance} that at least one item from the assortment will be bought, a \emph{feature} that complements the single item purchase pattern assumed under MNL.

\subsection{Maximum Inner Product Search (MIPS) and Nearest Neighbor (NN) Search}

\revision{ We now discuss two problems and their fast solutions that enable us in performing comparisons efficiently in the binary search based approaches detailed in Section~\ref{sec:optimize}. The first problem, namely MIPS, is that of finding the vector in a given set of vectors (points) which has the highest inner product with a query vector. Precisely, for a query vector $q$ and a set of points $P$ (with $N = |P|$), the optimization problem is: $ \max_{x \in P} q  \cdot  x$ (the `$\cdot$' operation stands for inner product). The second related problem, namely the nearest neighbor (NN) search problem, aims to find a vector in a given set of vectors which is closest to the query vector in terms of Euclidean distance. The MIPS problem can be transformed to this problem ~\citep{bachrach2014speeding, neyshabur2015symmetric} under some conditions, and both together form an integral part of our algorithm design. }

\revision{ When there is no structure on $P$, one can solve these(MIPS/NN) optimization problems via a linear scan, which can be quite slow for large problem instances. In the past few years, there has been substantial progress on fast methods for solving approximate versions of these. One such approximate problem is the \cnn \ problem defined below.}

\revision{ 
\begin{definition}
The \cnn \  (approximate \emph{nearest} neighbor) problem with failure probability $\failprob \in (0,1)$ and $c \geq 1$ is to construct a data structure over a set of points $P$ that supports the following query: given query $q$, report a $c$-approximate nearest neighbor of $q$ in $P$ i.e., return any $p'$ such that $d(p',q) \leq c\min_{p \in P}d(p,q)$ with probability $1-\failprob$.
\end{definition}
}

\revision{ 
 In the above, there is no approximation when $c=1$, and there is vanishing error when $\failprob\rightarrow 0$. Many fast sub-linear time solutions exist for solving the \cnn \ problem. For instance, Theorems 2.9 and 3.4 in \cite{har2012approximate} (see~\ref{subsec:LSH} for details) demonstrate data structures that solve the $(c(1+O(\gamma)), P_e)$-NN problem in time $O( n N^{\rho(c)} \log{1/P_e})$ and space $O(n N^{1+\rho(c)}/\gamma )$ for some $ \gamma \in \left( 1/N, \ 1 \right)$ and $ \rho(c) < 1$, where $n$ is the dimension of points in $P$ (and recall that $N=|P|$). In particular,  \cite{andoni2008near} propose techniques such that $\rho(c) = \frac{1}{c^2} + O(\log\log N / \log^{1/3}N)$. For large enough $N$, and for say $c = 2$, $\rho(c) \approx .25$, and thus the query completion time is $\propto nN^{.25}$. This is a significant speed up over exhaustive search which would take $O(nN)$ time.}

\revision{
While there are many ways to similarly define the approximate MIPS problem (see \cite{teflioudi2016exact}), we choose the following definition, which is more suited for the analysis our our algorithms in Section~\ref{sec:optimize}.
\begin{definition}
	The \nmips \ problem with failure probability $\failprob \in (0,1)$ and $\nu \geq 0$ is to construct a data structure over a set of points $P$ that supports the following query: given query $q$,  return any $p'$ such that  $1 + (1+\nu)^2(\max_{p \in P}p\cdot q - 1) \leq p' \cdot q \leq \max_{p \in P}p\cdot q$ with probability $1-\failprob$.
\end{definition}
}

\revision{
In the above, there is no approximation when $\nu=0$, and there is vanishing error when $\failprob\rightarrow 0$. The following Lemma describes the equivalence between the solutions of the \nonenn \ and \nmips \ problems, allowing us to use the fast data structures available for the \cnn \ problem (with $c = 1+\nu$) as is.
\begin{lemma} \label{lemma:nn2mips}
For a set of points $P$ and query $q$ lying on the unit sphere, if $p'$ is a solution to the \nonenn  \ problem, then it is also a solution to the \nmips \ problem.
\end{lemma}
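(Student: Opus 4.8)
The plan is to exploit the fact that on the unit sphere Euclidean distance and inner product are in an exact monotone correspondence, so a multiplicative guarantee on distance translates directly into the additive-in-$(p\cdot q - 1)$ guarantee required by the \nmips{} definition. So the proof is essentially a one-line algebraic identity followed by careful bookkeeping of an inequality.

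First I would record the elementary identity: for any $p \in P$ and the query $q$, both of unit norm, $d(p,q)^2 = \|p - q\|^2 = \|p\|^2 + \|q\|^2 - 2\, p \cdot q = 2(1 - p\cdot q)$. In particular $p \mapsto d(p,q)$ is minimized exactly where $p \mapsto p\cdot q$ is maximized; writing $p^* \in \argmax_{p \in P} p\cdot q$, this gives $\min_{p\in P} d(p,q)^2 = 2(1 - p^*\cdot q)$.

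Next I would take $p'$ to be an output of the \nonenn{} data structure on $P$ with query $q$. By definition, with probability at least $1 - \failprob$ it satisfies $d(p',q) \le (1+\nu)\min_{p\in P} d(p,q)$. Squaring (legitimate since both sides are nonnegative) and substituting the identity on both sides yields $2(1 - p'\cdot q) \le (1+\nu)^2 \cdot 2(1 - p^*\cdot q)$; cancelling the factor $2$ and rearranging gives $p'\cdot q \ge 1 + (1+\nu)^2(p^*\cdot q - 1)$, which is precisely the lower bound in the \nmips{} definition with $\max_{p\in P} p\cdot q = p^*\cdot q$. The matching upper bound $p'\cdot q \le \max_{p\in P}p\cdot q$ is immediate because $p'$ is itself a member of $P$. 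Since the only probabilistic statement invoked is the $1-\failprob$ guarantee of the NN query, the same failure probability is inherited, completing the argument.

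There is essentially no hard step here: the entire content is the unit-sphere identity $d(p,q)^2 = 2(1-p\cdot q)$ together with tracking the direction of the inequality after squaring. The one place to be careful is to state explicitly that the upper bound in the MIPS definition holds only because $p' \in P$ (true by construction), and that squaring preserves the inequality only because distances are nonnegative; neither is an obstacle, but both should be written out so the equivalence is airtight.
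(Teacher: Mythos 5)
Your proof is correct and is exactly the intended argument: the unit-sphere identity $d(p,q)^2 = 2(1 - p\cdot q)$ converts the multiplicative $(1+\nu)$ distance guarantee into the additive-in-$(p\cdot q - 1)$ lower bound required by the \nmips{} definition, with the upper bound following from $p' \in P$ and the failure probability inherited directly from the NN query. The paper states this lemma without an explicit proof, but your derivation is precisely the standard reduction it relies on (via the cited equivalence between NN search and inner product search on the sphere), and it is complete as written.
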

}

\revision{
The assumption that points $P$ lie on the unit sphere can be satisfied with straightforward transformations as long as they are bounded \citep{neyshabur2015symmetric}.}
\revision{Thus, given the above equivalence, we can assume that \nmips \ instances can be solved in time $O( n N^{\rho(c)} \log{1/P_e})$ and space $O(n N^{1+\rho(c)}/\gamma )$ for some $ \gamma \in \left( 1/N, \ 1 \right)$ and $ \rho(c) < 1$, where $n$ is the dimension of points in $P$ and $N$ is the number of points in $P$. As we show in the next section, \nmips \ and its fast solutions allow for scalable design of our algorithms, namely \aheu{} and \alsh.  
}
\section{New Algorithms for Assortment Planning}
\label{sec:optimize}

\revision{In this section, we propose three new algorithms: \ann, \aheu \  and \alsh, with each aiming to find an assortment with revenue that is within an additive tolerance $\epsilon$ of the optimal revenue $\revstarv$ (here $S^*$ is an optimal assortment and $\bv$ denotes the MNL model parameter). All three methods, which work with an arbitrary feasible collection of relevant assortments, transform the original maximization problem into a sequence of decision problems. These decision problems can either be solved exactly or approximately, with the latter achievable under smaller time complexities than the former. In particular, each of these decision problems can be viewed as an instance of the MIPS problem introduced in Section~\ref{sec:preliminaries}. The benchmark algorithm when there is no structure in the feasible region is by default exhaustive search, and we will make comparisons to this benchmark throughout.}

\revision{Our first algorithm \ann{} uses binary search and exact MIPS solvers to optimize assortments. Our second algorithm \aheu \ shows how approximate solutions to the MIPS problem can be used effectively assuming the approximation quality is known. Finally, the third algorithm \alsh \ shows how to use MIPS solvers that lead to incorrect decisions with some unknown probability and still obtain a near optimal assortment. Along the way, we also present an optimized version of \ann \ when the feasible set of assortments is given by capacity constraints and variants thereof (this is a special case).}

\subsection{First Algorithm: \ann{}}\label{subsec:ann}

\ann \ (Algorithm~\ref{alg:ann_outline}) aims to find an $\epsilon$-optimal assortment (i.e., an assortment with revenue within a small interval defined by tolerance parameter $\epsilon$ of the optimal assortment's revenue).  In this algorithm, we search for the revenue maximizing assortment using the binary search procedure. In each iteration, we maintain a search interval and check if there exists an assortment with revenue greater than the mid-point of the search interval. Then, we perform a binary search update of the search interval, i.e., if there exists such an assortment then the lower bound of the search interval is increased to the mid-point (and this assortment is defined as the current optimal assortment). Otherwise, the upper bound is decreased to the mid-point. 

\begin{algorithm}[H]
\caption{\ann{}}
\label{alg:ann_outline}
	\begin{algorithmic}[1] 
		\REQUIRE{ Prices $\{p_i\}_{i=1}^{n}$, model parameter \bv, tolerance parameter $\epsilon$, set of feasible assortments $\cS$ } \\
		\STATE{$L_1 = 0, U_1 = p_1 , t=1, \hat{S} = \{1 \} $}\\ 
		\WHILE{ $U_j - L_j > \epsilon$} 
		\STATE{ $K_j = (L_j + U_j)/2 $} \\
		\IF{$K_j \leq  \max_{S \in \cS } \revsv$ \label{alg:compare-step} } 
		\STATE{$L_{j+1} =  K_j, U_{j+1} = U_j$}
		\STATE{Pick any $\hat{S}  \in \{ S:\revsv \geq K_j \}$}
		\ELSE 
		\STATE{$L_{j+1} = L_j , U_{j+1} = K_j$} 
		\ENDIF 
		\STATE{$j = j+1$}
		\ENDWHILE
		\RETURN{$ \hat{S} $}
	\end{algorithmic}
\end{algorithm}

We continue iterating and narrowing down the search space until its length becomes less than the tolerance parameter $\epsilon$. Crucially, note that the binary search loop is redundant if the comparison step is strengthened to solve the assortment optimization problem itself, as we have done here. The strengthening serves the following purpose: even though the overall time complexity of \ann \ is theoretically increased (see Section~\ref{subsubsec:ann-general}), the comparison can be reformulated into a MIPS instance such that in practice \ann \ can solve for the optimal assortment faster than exhaustive search (see Section~\ref{sec:experiments}).

The search interval starts with the lower and upper bounds ($L_1$ and $U_1$) as $0$ and $p_1$ respectively, where $p_1$ is the  highest price among all items (note that $p_1$ is an upper bound on the revenue of any assortment). 
Before the start of the algorithm, the optimal assortment is initialized to the set $\{ 1 \}$. 
\revision{If all the assortments have revenue less than the tolerance parameter $\epsilon$, then every assortment is $\epsilon$-optimal. Without loss of generality, the algorithm returns the set $\{ 1 \}$ as an $\epsilon$-optimal assortment. }
\revision{Algorithm~\ref{alg:ann_outline} produces an $\epsilon$-optimal assortment as formalized by the following easy to derive lemma with the proof provided in \ref{app:algo_correct}.  }

\begin{lemma} \label{lem:ann_correct}
The assortment returned by \ann \ is $\epsilon$-optimal i.e. $ \revshatv \geq \revstarv -  \epsilon$ and the number of iterations needed is $ \left\lceil \log(p_1/\epsilon) \right \rceil$.
\end{lemma}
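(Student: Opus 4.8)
The plan is to run the standard binary-search correctness argument: isolate a loop invariant, prove it by induction on the iteration index, and then extract both conclusions of the lemma from the invariant at termination.

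Concretely, I would show that at the start of iteration $j$ of the \textbf{while} loop in Algorithm~\ref{alg:ann_outline} the following three statements hold: (i) $L_j \le \revstarv \le U_j$, so the current interval brackets the optimal revenue; (ii) $\revshatv \ge L_j$, so the stored assortment is always at least as good as the current lower bound; and (iii) $U_j - L_j = p_1/2^{\,j-1}$, so the interval shrinks geometrically. For the base case $j=1$: $L_1 = 0 \le \revstarv$; $U_1 = p_1$ is an upper bound on the revenue of any assortment since $\revsv = \sum_{l\in S} p_l\,\mathbb{P}(l\mid S) \le p_1\sum_{l\in S}\mathbb{P}(l\mid S) \le p_1$; the initialization $\hat S = \{1\}$ gives $\revshatv \ge 0 = L_1$; and $U_1 - L_1 = p_1$.

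For the inductive step I would split on the comparison in line~\ref{alg:compare-step} with $K_j = (L_j+U_j)/2$. If $K_j \le \max_{S\in\cS}\revsv = \revstarv$, then $\{S \in \cS : \revsv \ge K_j\}$ is nonempty (it contains $S^*$), so a valid $\hat S$ can be picked; the updates give $L_{j+1} = K_j \le \revstarv$, $U_{j+1} = U_j \ge \revstarv$, $\revshatv \ge K_j = L_{j+1}$, and $U_{j+1}-L_{j+1} = (U_j - L_j)/2$. If instead $K_j > \revstarv$, then $U_{j+1} = K_j > \revstarv$ and $L_{j+1} = L_j \le \revstarv$; $\hat S$ is unchanged, so $\revshatv \ge L_j = L_{j+1}$ still holds by the inductive hypothesis; and again the interval halves. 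Either way (i)--(iii) are preserved. Invariant (iii) shows that after $k$ completed iterations the interval has length $p_1/2^{k}$, so the loop stops at the smallest $k$ with $p_1/2^{k} \le \epsilon$, namely $k = \lceil \log(p_1/\epsilon)\rceil$; and at that point invariants (i)--(ii) give $\revstarv \le U_j \le L_j + \epsilon \le \revshatv + \epsilon$, which is the claimed $\epsilon$-optimality.

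I do not anticipate a substantive obstacle; the proof is routine. The only points requiring a little care are (a) confirming that the fallback $\hat S = \{1\}$ is never problematic — this is exactly the case where the comparison never succeeds, so $L_j$ stays $0$ and invariant (ii) reduces to the trivial $\revshatv \ge 0$, covering the situation (noted before the lemma) in which every assortment has revenue below $\epsilon$; and (b) matching the iteration count to the strict inequality in the loop guard $U_j - L_j > \epsilon$, so the ceiling in $\lceil \log(p_1/\epsilon)\rceil$ comes out correctly.
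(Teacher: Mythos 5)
Your proof is correct and follows essentially the same argument as the paper's: the interval always brackets $\revstarv$, the stored assortment's revenue stays at or above the current lower bound, and the interval halves each iteration, giving both the $\epsilon$-optimality and the $\lceil \log(p_1/\epsilon)\rceil$ iteration count. The only cosmetic difference is that you package the argument as an explicit loop invariant proved by induction, whereas the paper tracks the last iteration at which the comparison succeeded; the content is the same.
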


\revision{Ideally, if we want to solve the original revenue maximization problem using a sequence of decision problems within a binary search loop, we need to perform a slightly different comparison at each step in \ann, which is to check if there exists an assortment with revenue greater than the mid-point ($K$) of the current search interval.}
As a first key step, we have already changed this existence check to a stronger comparison, namely $K \leq  \max_{S \in \cS } \revsv$, and we refer to this as the \textsc{Compare-Step} (see line~\ref{alg:compare-step} of Algorithm~\ref{alg:ann_outline}). The strength of \ann \ is in its ability to efficiently answer a transformed version of this strengthened comparison.

\revision{The transformation is as follows.}  In every iteration, we needed to check if there exists a set $S \in \cS$  such that: $ K \leq (\sum_{i \in S} p_i v_i)/(v_0 + \sum_{i \in S} v_i) 
\Leftrightarrow K \leq \frac{1}{v_0} \ \sum_{i \in S}  v_i (p_i - K)$.
This is equivalent to evaluating the strengthened comparison: $ K \leq \max_{S \in \cS } \allowbreak \sum_{i \in S}  \allowbreak v_i (p_i - K)/v_0$, allowing us to focus on the transformed optimization problem within the comparison: 
 \vspace{-3mm}
\begin{equation}
\max_{S \in \cS}  \ \sum_{i \in S}  v_i (p_i - K).
\label{opt_prob}
\end{equation}

\revision{Before continuing on, we first discuss the special case when the feasible region has a compact representation (i.e., the capacitated setting) in Section~\ref{subsec:capacity-constrained-modification}. The special case will not need MIPS solvers. After that, we will come back to the general setting  where there is no structure in the feasible region in Section~\ref{subsubsec:ann-general}, and continue discussing the above transformation. In particular, we will relate it to the MIPS problem discussed in Section~\ref{sec:preliminaries}.}

\subsubsection{Assortment Planning with Capacity Constraints} \label{subsec:capacity-constrained-modification}

Consider the well studied capacity-constrained setting with $\cS = \{ S:  |S| \leq C  \}$, where constant $C$ specifies the maximum size of feasible assortments.  The key insight here is that the operation $ \argmax_{S \in \cS}  \ \sum_{i \in S}  v_i (p_i - K)$ can be decoupled into problems of smaller size, each of which can be solved efficiently. This is because the problem can be interpreted as that of finding a set of at most $C$ items that have the highest value $v_i(p_i - K)$, and that this value is positive. Thus, to solve this optimization problem we only need to calculate the value $v_i(p_i - K)$ for each item $i$ and sort these values. This strategy can also be extended to many other capacity-like constraints such as a lower bound on the size of feasible assortments, capacity constraints on subsets of items, and finding assortments near a reference assortment. These are described in ~\ref{app:capacity-extensions}.

\revision{
\begin{lemma} \label{lem:ann_capacity_time}
	The time-complexity of \ann \ for assortment planning with capacity constraints is $O(n\log C \log \frac{p_1}{\epsilon})$ and the space complexity is $O(n)$.
\end{lemma}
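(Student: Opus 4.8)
The plan is to combine the iteration count from Lemma~\ref{lem:ann_correct} with a per-iteration cost bound for the \textsc{Compare-Step} specialized to $\cS=\{S:|S|\le C\}$. By Lemma~\ref{lem:ann_correct}, the binary-search loop of \ann{} terminates after $\lceil\log(p_1/\epsilon)\rceil = O(\log(p_1/\epsilon))$ iterations, so it suffices to show that each iteration runs in $O(n\log C)$ time and that the whole algorithm uses $O(n)$ working memory. Everything in the loop body other than the \textsc{Compare-Step} (updating $L_j, U_j, K_j$ and possibly storing $\hat S$) is constant work plus copying the candidate assortment, which has size at most $C\le n$.

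The work in an iteration is dominated by evaluating the strengthened comparison $K_j v_0 \le \max_{S\in\cS}\sum_{i\in S} v_i(p_i-K_j)$, i.e.\ by solving the transformed problem in Section~\ref{subsec:capacity-constrained-modification}. As observed there, with $w_i := v_i(p_i-K_j)$ an optimal $S$ consists of the indices attaining the $\min(C,\,|\{i:w_i>0\}|)$ largest strictly positive values among $w_1,\dots,w_n$ (adding any further item can only contribute a nonpositive term). Computing all $w_i$ takes $O(n)$ time. To extract the top (at most) $C$ positive values without paying for a full sort, maintain a min-heap of capacity $C$: scan $i=1,\dots,n$ and, if $w_i>0$ and either the heap holds fewer than $C$ elements or $w_i$ exceeds the current heap minimum, insert $w_i$, evicting the minimum whenever the heap is at capacity. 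Each heap operation costs $O(\log C)$, so the scan costs $O(n\log C)$; maintaining the running sum $M$ of the heap contents incrementally (or recomputing it once at the end in $O(C)$) is subsumed. The comparison returns \textbf{true} iff $M\ge K_j v_0$, in which case $\hat S$ is set to the (at most $C$) indices currently in the heap. Hence one iteration costs $O(n\log C)$---the $O(n)$ term for forming the $w_i$'s is absorbed since $\log C\ge 1$ for $C\ge 2$, and the degenerate cases $C=1$ or $C\ge n$ are trivially $O(n)$. Multiplying by the $O(\log(p_1/\epsilon))$ iterations gives the claimed time bound $O(n\log C\log(p_1/\epsilon))$.

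For the space bound, the inputs $\{p_i\}$ and $\bv$ occupy $O(n)$; the scratch values $w_i$ can be streamed, but even if stored they take $O(n)$; the min-heap and the stored candidate $\hat S$ each use $O(C)\subseteq O(n)$; and $L_j,U_j,K_j,M,j$ are $O(1)$. Thus the total space is $O(n)$.

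The only real subtlety is the per-iteration bound: one must resist fully sorting the $n$ values (which would cost $O(n\log n)$) and instead observe that a capacity-$C$ priority queue suffices to identify the $C$ largest positive weights, which is exactly what pins the per-iteration cost at $O(n\log C)$ rather than $O(n\log n)$. The handling of the degenerate cases ($C=1$, $C\ge n$, or fewer than $C$ positive weights) and the correctness of the reduction to top-$C$ selection are already covered by the discussion preceding the lemma.
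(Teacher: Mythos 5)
Your proposal is correct and follows essentially the same route as the paper's proof: multiply the $\lceil\log(p_1/\epsilon)\rceil$ iteration bound from Lemma~\ref{lem:ann_correct} by an $O(n\log C)$ per-iteration cost for the top-$C$ selection in the \textsc{Compare-Step}, and note that only $O(n)$ storage is needed. Your write-up merely spells out the heap-based selection and degenerate cases that the paper leaves implicit.
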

}

Note that other algorithms  for solving the capacitated assortment planning problem, such as ADXOpt and \textsc{Static-MNL} have time complexity quadratic in $\noofprod$ because they are performing exact search. Further, note that exhaustive search is not competitive here as its time complexity is O($n^C$).

\subsubsection{Assortment Planning with General Constraints}\label{subsubsec:ann-general}

In the absence of structure in the set of feasible assortments, we cannot decouple the optimization problem in Equation (\ref{opt_prob}) as before. Nonetheless, this can be reduced to a specific MIPS problem instance as described in Algorithm \ref{alg:ann_comp}. Here, $\mathbf{1}\{ \cdot \}$ represents the indicator function and $\textbf{a} \circ \textbf{b}$ represents the Hadamard product between vectors $\textbf{a}$ and $\textbf{b}$ \revision{(i.e., produces a vector by performing an element-wise product of the two input vectors). }

\begin{algorithm}[H]
	\caption{\textsc{Compare step} in \ann{} }
	\label{alg:ann_comp}
	\begin{algorithmic}
		\STATE{Given comparison: 
			$K \leq \max_{S \in \cS }  \sum_{i \in S}  v_i (p_i - K)/v_0$}
		\STATE{Formulate an equivalent MIPS instance with:
			\vspace{-4mm}
			\begin{align*}
			& \bp := \left( p_1, p_2 \cdots p_{\noofprod} \right) ; \ \mathbf{\hat{v}_K} := (v_1, \cdots, v_n, -v_1K,-v_2K, \cdots -v_{\noofprod}K) \\
			& \bu^S := (u_1, u_2, \cdots u_n) \ \text{ where } \  u_i = \mathbf{1} \{ i \in S\}, \text{for any } S \in \cS  \\
			& \hat{\bz}^S := \left( \bp \circ \bu^S, \bu^S \right), \text{ for any } S \in \cS ; \ \mathbf{\widehat{Z}} := \{ \hat{\mathbf{z}}^S : S \in \cS \}
			\end{align*}
		}\\
	\vspace{-3mm} 
		\STATE{Solve the MIPS instance: $ \bz^{\tilde{S}} \in \argmax_{\hat{\bz}^S \in \mathbf{\widehat{Z}}   } \mathbf{\hat{v}_K} \cdot \hat{\bz}^S$}
		\STATE{Output result of an equivalent comparison: $  K \leq  \bv \cdot \bz^{\tilde{S}}/v_0 $}
	\end{algorithmic}
\end{algorithm}

The number of points in the search space of the above MIPS problem is $N = |\cS|$. As discussed in Section~\ref{sec:preliminaries}, this reductions allows for efficient exact and approximate answers to the comparison. In \ann \ we will assume that the MIPS instances are solved exactly. In this case, the following time complexity result holds.
 
 \revision{
 \begin{lemma}
 	The time-complexity of \ann \ for assortment planning with general constraints is $O(\noofprod \noofset \log \frac{p_1}{\epsilon}  )$ and the space complexity is $O(nN)$. \label{lemma:ann-general-time-complexity}
 \end{lemma}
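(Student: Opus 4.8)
The plan is to combine the iteration count from Lemma~\ref{lem:ann_correct} with a per-iteration cost analysis of the \textsc{Compare-Step}. The correctness lemma already establishes that the binary search loop runs for exactly $\lceil \log(p_1/\epsilon)\rceil = O(\log(p_1/\epsilon))$ iterations, so the only thing left is to bound the work done inside each iteration and the total storage. First I would argue that each iteration does $O(1)$ arithmetic to compute the midpoint $K_j$ and perform the binary-search update, plus the cost of one \textsc{Compare-Step} call; hence the dominant term is (number of iterations) $\times$ (cost of one \textsc{Compare-Step}).

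Next I would analyze the \textsc{Compare-Step} of Algorithm~\ref{alg:ann_comp}. Constructing the query vector $\mathbf{\hat{v}_K}$ takes $O(n)$ time (scaling the $v_i$'s by $K$ and concatenating), and the evaluation of the final comparison $K \le \bv\cdot\bz^{\tilde{S}}/v_0$ costs $O(n)$ since $\bz^{\tilde{S}}$ has dimension $2n$. The core cost is solving the MIPS instance \emph{exactly}, which in \ann{} is done by a linear scan over the $N = |\cS|$ points in $\mathbf{\widehat{Z}}$: each point $\hat{\bz}^S$ lies in $\real^{2n}$, so one inner product is $O(n)$, and scanning all $N$ of them is $O(nN)$. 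Thus one \textsc{Compare-Step} costs $O(nN)$, and multiplying by the $O(\log(p_1/\epsilon))$ iterations gives the claimed time bound $O(\noofprod\noofset\log\frac{p_1}{\epsilon})$.

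For the space complexity, I would note that the MIPS search space $\mathbf{\widehat{Z}}$ consists of $N$ vectors, each of dimension $2n$, giving $O(nN)$ storage; the prices, the model parameters $\bv$, and each query vector $\mathbf{\hat{v}_K}$ add only $O(n)$, which is dominated. Note the set $\mathbf{\widehat{Z}}$ need not be recomputed across iterations since only the query $\mathbf{\hat{v}_K}$ depends on $K$ — although even recomputing it each time would not change the asymptotics. This yields the stated $O(nN)$ space bound.

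There is no real obstacle here: the statement is, as the authors note, ``easy to derive,'' and the argument is essentially bookkeeping. The only point that deserves a sentence of care is justifying that an exact MIPS solve over an unstructured point set genuinely requires (and suffices with) a linear scan costing $O(nN)$ — i.e.\ that we are not secretly assuming a sublinear exact solver — and that the reduction in Algorithm~\ref{alg:ann_comp} is dimension-$2n$ rather than dimension-$n$, so the per-point inner product is $O(n)$ and the bound is $O(nN)$ and not, say, $O(n^2 N)$ or $O(N)$.
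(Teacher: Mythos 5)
Your argument is correct and matches the paper's own proof: both multiply the $\lceil \log(p_1/\epsilon)\rceil$ iteration count from Lemma~\ref{lem:ann_correct} by the $O(nN)$ cost of an exact linear-scan MIPS solve per iteration, and both charge the $O(nN)$ space to storing the feasible-set vectors. Your added bookkeeping about the dimension being $2n$ is a harmless elaboration that does not change the approach.
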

} 

Again, as mentioned earlier, while the above theoretical result is worse than exhaustive search (whose complexity is $\Theta(nN)$), if MIPS is solved using nearest neighbor searches, it turns out that there is a distinct computational advantage to be had. This is because exact nearest neighbor searches in vector spaces are very efficient in practice, and allow \ann \ to have a much better computational performance compared to exhaustive search (see Section~\ref{sec:experiments}).  

\subsection{ Second Algorithm: \aheu }\label{subsec:aheu}

Recall that to solve the MIPS problem introduced in the \ann \ algorithm, we need the nearest point to $\mathbf{\hat{v}_K}$ in the set $\mathbf{\widehat{Z}}$ according to the inner product metric (Algorithm \ref{alg:ann_comp}).  \revision{ Now, instead of finding such a nearest point exactly, we can consider computing it approximately (i.e., it will be \emph{near enough}) and/or probabilistically (i.e., the returned point may not be near enough with some probability). In other words, we can find a \nmips \ solution. We denote the \nmips \ solution in this context with the notation $ \apargmax_{\bz^S \in \mathbf{Z}(K)   }  \bv \cdot \bz^S$, where $\nu$ and $\failprob$ are the approximation and probability of failure parameters respectively (and are determined by the underlying solution technique used to solve \nmips). }

\revision{Solving the \nmips \ problem in place of the MIPS problem leads to two types of inaccuracies in the solution used in the comparison (which naturally depend on the parameters $\nu$ and $\failprob$) .} We will focus on the approximation aspect (near enough versus nearest) in this subsection, and address the possibility of errors in Section~\ref{subsec:alsh}. \revision{Thus, \aheu{} described below will take approximation into account, while \alsh{} in Section~\ref{subsec:alsh} will take errors into account, and for both we will provide formal error guarantees on the revenue of the assortment returned.} 

Algorithm \aheu{} (see Algorithm~\ref{alg:aheu_eff_mod}) is similar to \ann \, except for two changes: (a) the operation $\bz^{\tilde{S}} = \argmax_{\bz^S \in \mathbf{Z}(K)   }  \bv \cdot \bz^S$ is replaced by $\hat{ \bz}^{\tilde{S}} = \apargmax_{\bz^S \in \mathbf{Z}(K)   }  \bv \cdot \bz^S$, and (b) the approximation in the quality of the solution returned is taken into account in the binary search update.

\begin{algorithm}[H]
	\caption{\aheu{}}
	\label{alg:aheu_eff_mod}
	\begin{algorithmic} 
		\REQUIRE{ Prices $\{p_i\}_{i=1}^{n}$, tolerance $\epsilon$, approximation factor $\nu$ } \\
		\STATE{$L_1 = 0, \ U_1 = p_1 , \ t=1,\bp = (p_1, \cdots, p_n)$} 
		\STATE{$\bu^S = (u_1, u_2, \cdots u_n) \ \text{ where } \  u_i = \mathbf{1} \{ i \in S\} , \text{ for any } S \in \cS $}
		\STATE{$\hat{S}  = \{1 \}, \ \mathbf{\widehat{Z}} = \{ \mathbf{\hat{z}}^S | \hat{\bz}^S = \left( \bp \circ \bu^S, \bu^S \right), S \in \cS \}$}
		\WHILE{ $U_j - L_j > \epsilon$} 
		\STATE{ $K_j = \frac{L_j + U_j}{2}$} \\
		\STATE{$\hat{K}_j = 1 + (1+\nu)^2(K_j - 1)$} \\
		\STATE{$\mathbf{\hat{v}_{K_j}} = \small{(v_1, \cdots, v_n, -v_1K_j,-v_2K_j, \cdots -v_nK_j)}$} \\
		\STATE{$ \hat{\bz}^{\tilde{S}_j} = \apargmax_{\mathbf{\hat{z}}^S \in \mathbf{\hat{Z}}} \ \mathbf{\hat{v}_K} \cdot \mathbf{\hat{z}}^S $}
		\IF{$\hat{K}_j >  \frac{ \bv \cdot \hat{\bz}^{\tilde{S}_j}}{v_0}    $}  
		\STATE{$L_{j+1} = L_j , U_{j+1} = K_j$} \\
		\ELSIF{$K_j \leq  \frac{ \bv \cdot \hat{\bz}^{\tilde{S}_j}}{v_0}    $}  
		\STATE{$L_{j+1} =  K_j, U_{j+1} = U_j, \hat{S}  = \tilde{S}_j $}
		\ELSE
		\STATE{$L_{j+1} = \hat{K}_j , U_{j+1} = U_j, \hat{S}  = \tilde{S}_j $}
		\ENDIF 
		\STATE{$j = j+1$}
		\ENDWHILE
		\RETURN{$ \hat{S} $}
	\end{algorithmic}
\end{algorithm}

For any comparison threshold $K$, let $\hat{K} = 1 + (1+\nu)^2(K - 1)$.   Without loss of generality, let $p_1 \leq 1$, which implies that $\hat{K} \leq K$. Because the returned solution $\approxsol$ is approximate, we can only assert that one of the following inequalities is true for any given comparison, giving us the corresponding update for the next iteration:

\begin{itemize}
\item If $\hat{K} \geq \approxsol \cdot \bv$, then $K \geq \exactsol \cdot \bv$, allowing update of the upper bound of the the search region  to $K$.
\item If $K \leq \approxsol \cdot \bv$, then $K \leq \exactsol \cdot \bv$, allowing update of the lower bound of the the search region  to $K$.
\item If $\hat{K} \leq \approxsol \cdot \bv$ and $K \geq \approxsol\cdot \bv$, then $\hat{K} \leq \exactsol \cdot \bv$, allowing update of the lower bound of the the search region  to $\hat{K}$.
\end{itemize}

We give a proof of the update in the first setting in ~\ref{subsec:aheu-update}. The updates in the other two settings are similar. These conditions and the efficient transformation discussed in Section~\ref{subsubsec:ann-general} are summarized in Algorithm \ref{alg:aheu_eff_mod}. This algorithm gives a desired solution as stated in the lemma below (with the proof in \ref{app:algo_correct}).

\begin{lemma} \label{lem:aheu_correct}
With no probabilistic errors in the $\apargmax$ operation i.e. $\failprob=0$, the assortment returned by \aheu \ is $\epsilon$-optimal i.e. $ \revshatv \geq \revstarv -  \epsilon$.
\end{lemma}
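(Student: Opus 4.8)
The plan is to mirror the correctness argument for \ann{} (Lemma~\ref{lem:ann_correct}), tracking a suitable invariant on the search interval $[L_j, U_j]$ and then converting it into a revenue guarantee on the returned assortment $\hat S$. The key difference from \ann{} is that the \emph{approximate} MIPS solver only certifies that $\approxsol \cdot \bv$ lies between $\bv \cdot \exactsol$ and $1 + (1+\nu)^{-2}\big(\bv\cdot\exactsol - 1\big)$ (equivalently, $\hat K \le \approxsol \cdot \bv$ iff $\hat K' \le \exactsol\cdot \bv$ for the appropriately inflated threshold), so each binary-search update is looser than in \ann{}. I would first restate, using the three bullet-point implications already spelled out in the text (and proved in~\ref{subsec:aheu-update}), the following invariant: \emph{at the start of every iteration $j$, either $U_j \ge \revstarv$, or else the current incumbent $\hat S$ already satisfies $R_{\bv}(\hat S) \ge U_j - (\text{something that vanishes as } U_j - L_j \to 0)$.} More precisely, I expect the clean invariant to be: $L_j \le \revstarv$ is \emph{not} maintained (that is where \aheu{} differs from \ann{}), but rather $U_j \ge \revstarv$ always holds and whenever the incumbent was last updated at threshold $K$ we have $R_{\bv}(\hat S) \ge \hat K = 1 + (1+\nu)^2(K-1)$.

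Concretely, I would argue by induction on $j$. The base case is immediate: $U_1 = p_1 \ge \revstarv$ and the incumbent $\{1\}$ has revenue $\ge \hat L_1$ trivially (or is covered by the degenerate "all revenues below $\epsilon$" case handled exactly as in \ann{}). For the inductive step, split on the three branches of Algorithm~\ref{alg:aheu_eff_mod}. In the first branch ($\hat K_j > \bv\cdot\approxsol / v_0$), the text's first implication gives $K_j \ge \bv\cdot\exactsol/v_0 = \max_{S}\big(\dots\big)$, which by the exact-MIPS-to-revenue equivalence (the transformation in Section~\ref{subsubsec:ann-general}) means \emph{no} feasible assortment has revenue exceeding $K_j$; hence $\revstarv \le K_j = U_{j+1}$, preserving $U \ge \revstarv$, and the incumbent is untouched. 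In the other two branches the lower bound moves up (to $K_j$ or to $\hat K_j$) and $\hat S$ is reset to $\tilde S_j$; here the second/third implications give $R_{\bv}(\tilde S_j) \ge K_j \ge L_{j+1}$ or $R_{\bv}(\tilde S_j) \ge \hat K_j = L_{j+1}$ respectively, while $U_{j+1} = U_j \ge \revstarv$ is inherited. So in all cases $U_{j+1}\ge \revstarv$ and $R_{\bv}(\hat S) \ge L_{j+1}$ after any update (and $R_{\bv}(\hat S)$ never decreases because $\hat S$ only changes when we can certify a matching lower bound).

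On termination we have $U_j - L_j \le \epsilon$, hence $R_{\bv}(\hat S) \ge L_j \ge U_j - \epsilon \ge \revstarv - \epsilon$, which is exactly $\epsilon$-optimality. I would also remark that the loop terminates: each iteration strictly shrinks the interval --- the first branch halves it, and the second/third branches shrink it to $[K_j, U_j]$ or $[\hat K_j, U_j]$, both of which are contained in $[L_j, U_j]$ with $U_j - K_j \le (U_j-L_j)/2$ and $U_j - \hat K_j = (1+\nu)^2(U_j - K_j) + \big((1+\nu)^2 - 1\big)(1-U_j)$; using $p_1 \le 1$ so that $U_j \le 1$ and $\hat K_j \le K_j$, one checks $\hat K_j \ge L_j$, so the interval never expands and in fact contracts geometrically, giving termination in $O(\log(p_1/\epsilon))$ iterations up to the $\nu$-dependent constant. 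The main obstacle I anticipate is getting the bookkeeping in the third branch exactly right: there the lower bound is set to $\hat K_j < K_j$, so I must double-check that (i) $\hat K_j \ge L_j$ (no backward move) using $U_j \le 1$, and (ii) the certified incumbent revenue $\ge \hat K_j$ genuinely matches the new $L_{j+1}$ --- this is where the $1 + (1+\nu)^2(K-1)$ gap is absorbed, and it is the one place where the argument is not a verbatim copy of the \ann{} proof. Everything else reduces to the already-established equivalence between the \textsc{Compare-Step}, the MIPS instance in Algorithm~\ref{alg:ann_comp}, and the inequality $K \le \bv\cdot\bz^{\tilde S}/v_0$.
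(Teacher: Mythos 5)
Your proposal is correct and follows essentially the same route as the paper's proof: both maintain that $U_j$ upper-bounds $\revstarv$ while the incumbent's revenue tracks the lower bound via the three branch implications, and both close with the termination condition $U_T - L_T \leq \epsilon$ to get $\revshatv \geq L_T \geq U_T - \epsilon \geq \revstarv - \epsilon$. The paper phrases this via the last iteration $\widetilde{T}$ at which the incumbent is updated rather than as an explicit induction, but the argument is the same; your extra care in the third branch (checking $\hat{K}_j \geq L_j$ and that $R_{\bv}(\tilde{S}_j) \geq \hat{K}_j$) only makes explicit what the paper asserts as ``easy to see.''
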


Further, the number of iterations needed in \aheu \ depends logarithmically on the desired tolerance parameter, as stated in the following result. 
\begin{lemma}
	The number of iterations for \aheu \ to obtain an $\epsilon$-optimal solution is given by  $\left\lceil \log \frac{p_1}{\epsilon - 2(\nu^2 + 2\nu)} \right \rceil$. \label{lemma:aheu-iteration-complexity}
\end{lemma}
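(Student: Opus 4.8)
The plan is to track the length of the binary-search interval, $\ell_j := U_j - L_j$, and to show it obeys an affine contraction $\ell_{j+1} \le \tfrac12\ell_j + \delta$ with $\delta := \nu^2 + 2\nu$ (note $(1+\nu)^2 = 1 + \delta$). Since $\ell_1 = U_1 - L_1 = p_1$, unrolling such a recursion and then demanding that the interval shrink below $\epsilon$ yields the stated count directly, in the same spirit as the clean binary-search argument behind Lemma~\ref{lem:ann_correct} for \ann.

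First I would check the per-iteration contraction in each of the three branches of \aheu's update. In the first branch $U_{j+1} = K_j$, $L_{j+1} = L_j$, so $\ell_{j+1} = K_j - L_j = \tfrac12\ell_j$; in the second branch symmetrically $\ell_{j+1} = U_j - K_j = \tfrac12\ell_j$. The only branch that does not halve the interval is the third, where $L_{j+1} = \hat K_j := 1 + (1+\nu)^2(K_j - 1)$ and $U_{j+1} = U_j$, giving $\ell_{j+1} = U_j - \hat K_j = (U_j - K_j) + (K_j - \hat K_j) = \tfrac12\ell_j + (K_j - \hat K_j)$. The key computation is $K_j - \hat K_j = (K_j - 1)\bigl(1 - (1+\nu)^2\bigr) = (1 - K_j)\,\delta$; using the paper's standing normalization $p_1 \le 1$ together with the nesting $0 = L_1 \le L_j \le K_j \le U_j \le p_1 \le 1$, we get $0 \le 1 - K_j \le 1$, hence $K_j - \hat K_j \le \delta$ (and incidentally $\hat K_j \le K_j$, so the third update is a genuine shrink). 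Thus $\ell_{j+1} \le \tfrac12\ell_j + \delta$ in all cases.

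Next I would unroll the recursion: after $T$ iterations, $\ell_{T+1} \le \tfrac{p_1}{2^T} + \delta\sum_{i=0}^{T-1} 2^{-i} < \tfrac{p_1}{2^T} + 2\delta$. The \textsc{while} loop terminates once the current interval has length at most $\epsilon$, so it suffices to pick $T$ with $\tfrac{p_1}{2^T} + 2\delta \le \epsilon$, i.e. $2^T \ge \tfrac{p_1}{\epsilon - 2\delta}$, i.e. $T \ge \log\!\bigl(\tfrac{p_1}{\epsilon - 2(\nu^2+2\nu)}\bigr)$; this presupposes the natural feasibility condition $\epsilon > 2(\nu^2+2\nu)$, under which the denominator is positive (and which is what makes $\epsilon$-optimality attainable in Lemma~\ref{lem:aheu_correct}). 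Taking $T = \bigl\lceil \log\tfrac{p_1}{\epsilon - 2(\nu^2+2\nu)} \bigr\rceil$ therefore forces termination, which is exactly the claimed iteration count.

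The only genuinely non-routine step is the middle one: recognizing that the ambiguous third branch leaves an additive slack $K_j - \hat K_j$ instead of halving the interval, controlling that slack uniformly by $\delta = \nu^2 + 2\nu$ via the normalization $p_1 \le 1$, and then noting that the transient geometric sum of the affine recursion $\ell_{j+1}\le \tfrac12\ell_j + \delta$ is bounded by the clean constant $2\delta$. Everything else is bookkeeping identical to the exact binary-search analysis.
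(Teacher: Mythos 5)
Your proof is correct and follows essentially the same route as the paper's: track the interval length $I_j = U_j - L_j$, show the affine recursion $I_{j+1} \le \tfrac12 I_j + (\nu^2+2\nu)$ using $\hat K_j - K_j = (\nu^2+2\nu)(K_j-1)$ and $p_1 \le 1$, unroll to get $I_T \le p_1/2^T + 2(\nu^2+2\nu)$, and solve for $T$. Your explicit remarks on the sign of $1-K_j$ and the feasibility condition $\epsilon > 2(\nu^2+2\nu)$ are slightly more careful than the paper's writeup but do not change the argument.
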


\revision{
Recall that the fast solution to \nmips \ described in Section~\ref{sec:preliminaries} requires $\failprob >0$. Nonetheless, one can still use such a solution in \aheu{} in practice, turning it into a heuristic (i.e., it is now not guaranteed to output an $\epsilon$-optimal solution). In this case, we can get the following computational complexity result.
\begin{corollary}
	The time-complexity of \aheu, assuming the $\apargmax$ operation is solved using \nmips \ (detailed in Section~\ref{sec:preliminaries}), is $O\left(\noofprod \noofset^{\rho(\nu)}  \log \frac{p_1}{\epsilon - 2(\nu^2 + 2\nu)} \right)$ for some $\rho(\nu) < 1$. Further, the space complexity  is $O( \noofprod \noofset^ {(1+\rho(\nu))} )$.
\end{corollary}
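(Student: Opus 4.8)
The plan is to write the total running time as (number of binary-search iterations) $\times$ (cost per iteration), and to note separately that the space is governed by the single similarity-search data structure built in preprocessing. The iteration count is immediate: by Lemma~\ref{lemma:aheu-iteration-complexity}, \aheu{} exits the while loop after $\left\lceil \log \frac{p_1}{\epsilon - 2(\nu^2 + 2\nu)} \right\rceil$ iterations regardless of how $\apargmax$ is implemented, so everything reduces to bounding the work of one iteration.

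For one iteration of \aheu{}, the arithmetic to form $K_j$, $\hat K_j$, and the query vector $\mathbf{\hat{v}_{K_j}}$ is $O(\noofprod)$, as are the inner-product evaluation $\bv \cdot \hat{\bz}^{\tilde{S}_j}/v_0$ and the constant-size comparison and interval update; the dominant term is the single $\apargmax$ call, i.e. one \nmips{} query over $\mathbf{\widehat{Z}} = \{\hat{\bz}^S : S \in \cS\}$. Here $N = |\cS| = \noofset$ and each $\hat{\bz}^S = (\bp \circ \bu^S, \bu^S) \in \real^{2\noofprod}$, so the ambient dimension is $\Theta(\noofprod)$. By Lemma~\ref{lemma:nn2mips}, after the standard preprocessing that places query and points on the unit sphere (which changes the dimension only by a constant factor), such a query is answered by a \cnn{} data structure with $c = 1+\nu$. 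Plugging into the guarantees of \cite{har2012approximate} cited in Section~\ref{sec:preliminaries} --- with the failure probability $\failprob$ and the parameter $\gamma \in (1/\noofset, 1)$ treated as fixed constants --- each query costs $O\!\left(\noofprod \noofset^{\rho(c)} \log(1/\failprob)\right) = O\!\left(\noofprod \noofset^{\rho(\nu)}\right)$, where $\rho(\nu) := \rho(1+\nu) < 1$ for $\nu > 0$.

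Multiplying the per-iteration cost by the iteration count gives
\[
O\!\left(\noofprod \noofset^{\rho(\nu)}\right)\cdot \left\lceil \log \frac{p_1}{\epsilon - 2(\nu^2 + 2\nu)} \right\rceil \;=\; O\!\left(\noofprod \noofset^{\rho(\nu)} \log \frac{p_1}{\epsilon - 2(\nu^2 + 2\nu)}\right),
\]
the claimed time bound. For the space bound, the point set $\mathbf{\widehat{Z}}$ depends only on $\cS$ and $\bp$ and not on the moving threshold $K_j$ --- only the query $\mathbf{\hat{v}_{K_j}}$ changes across iterations --- so the \cnn{} data structure is constructed once and reused, giving space $O(\noofprod \noofset^{1+\rho(c)}/\gamma) = O(\noofprod \noofset^{1+\rho(\nu)})$ with $\gamma$ a constant, which subsumes the $O(\noofprod)$ working memory of each iteration.

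The main obstacle here is bookkeeping rather than any substantive argument: one must be careful that the dimension entering the bound is that of the \emph{transformed} points (still $\Theta(\noofprod)$ after both the Hadamard construction in Algorithm~\ref{alg:ann_comp} and the unit-sphere embedding), that issuing a fresh query vector each round does not trigger a rebuild, and that $\failprob$ and $\gamma$ are legitimately absorbed into $O(\cdot)$ constants. It is also worth flagging the accompanying caveat (already noted before the corollary): since the fast \nmips{} solver has $\failprob > 0$, the $\epsilon$-optimality guarantee of Lemma~\ref{lem:aheu_correct} no longer applies, so the statement certifies only the running time of the heuristic, not the quality of its output.
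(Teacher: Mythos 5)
Your proposal is correct and follows exactly the route the paper intends: the corollary is stated as an immediate consequence of the iteration count in Lemma~\ref{lemma:aheu-iteration-complexity} multiplied by the per-query cost of the \nmips{} data structure from Section~\ref{sec:preliminaries} (with $\failprob$ and $\gamma$ absorbed as constants), and the space bound is that of the single preprocessed structure over $\mathbf{\widehat{Z}}$; indeed the paper gives no separate proof for this corollary in the appendix. Your explicit bookkeeping about the $\Theta(\noofprod)$ dimension of the transformed points, the reuse of one data structure across changing queries, and the caveat that $\failprob>0$ voids the $\epsilon$-optimality guarantee are all consistent with the paper's treatment.
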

}

\revision{Note that both the iteration complexity and the time complexity depend on the specific solution technique used to solve MIPS approximately (i.e., both depend explicitly on $\nu$). In the above result, a specific approximating MIPS solution approach was assumed, and the analysis approach extends to any other suitably chosen \nmips \ solution technique. Next, we explicitly take into account the fact that comparison step's outcome could result in an error if \nmips \ with $\failprob >0$ is used.}

\subsection{Third Algorithm: \alsh{}}\label{subsec:alsh}

As seen in Section~\ref{subsec:ann}, we  narrow our search interval based on the result of the MIPS solution in \ann \ (recall that out treatment throughout Section~\ref{sec:optimize} is in the arbitrary feasible assortments setting).  When \nmips \ is used in lieu of an exact MIPS solver, there is a chance of narrowing down the search interval to an incorrect range. To address this, we build on the BZ algorithm~\citep{burnashev1974interval} to accommodate the possibility of failure (captured by parameter $\failprob$) in the $\apargmax$ operation. 

\revision{To keep the focus on addressing probabilistic errors, we assume that $\nu=0$. This implies that the approximate MIPS solver returns the exact MIPS solution with probability $1-\failprob$. In fact, for the purpose of this discussion, we do not need $\nu$ to be exactly 0 but just small enough so that for any query $q$,  the only point $p'$ that satisfies $ 1+ (1+\nu)^2 \left( \max_{p \in P}p \cdot q - 1 \right) \leq p' \cdot q$ is $\argmax_{p \in P} p\cdot q$. }

The \alsh \ algorithm, building on the BZ procedure, is given in  Algorithm~\ref{alg:alsh}. 
The key difference between the BZ algorithm and a standard binary search is the choice of the decision threshold  point at which a comparison is made in each round. In binary search, we choose the  mid-point of the current search interval. As we cannot rule out any part of the original search interval when we receive noisy answers, in the BZ algorithm, we maintain a distribution on the value of the optimal revenue. In every iteration, we test if $K \leq \max_{S \in \cS}\revsv$ where $K$ is the median of the distribution, and then update the distribution based on the result of the comparison. 

For the analysis of the performance of the BZ based algorithm \alsh \ , we require that the errors in the $\apargmax$ operation in every iteration, are independent of each other. If such a property is not inherently satisfied by the underlying \nmips \ solution approach, multiple copies of the solution approach will need to be run independently. In other words, we create $T$ distinct data structures that solve the \nmips \ problem, where $T$ is the number of desired iterations (we show that this is logarithmic in the desired accuracy level, and hence is not a significant overhead). Finally, note that $\hat{\theta}_T$ can be greater than the optimal revenue, and \alsh \ does not always output an assortment that has revenue greater than or equal to $\hat{\theta}_T$. More fundamentally, we cannot guarantee an $\epsilon$-optimal revenue always, but we show below that we can achieve $\epsilon$-optimality with high probability.\\

\begin{algorithm}
\caption{\alsh{}}
\label{alg:alsh}
\footnotesize
\begin{algorithmic} 
\REQUIRE{ Prices $\{p_i\}_{i=1}^{n}$, tolerance parameter $\epsilon$ such that $p_1 \epsilon^{-1} \in \mathbb{N}$, number of steps $T$, 
parameter $\alpha < 0.5 $ such that $ P_e \leq \alpha $, and let  $\beta := 1 - \alpha$.} \\
\STATE{Posterior $\pi_j : [0,p_1] \rightarrow  \mathbb{R} $ after $j$ stages:  
$$ \pi_j(x) = \sum_{i=1}^{p_1 \epsilon ^{-1}}a_i(j)\mathbf{1}_{I_i}(x), $$
where  $I_1 = [0,\epsilon]$ and $ I_i = (\epsilon (i-1),\epsilon i]$ for  $i \in \{2, \cdots, p_1 \epsilon^{-1} \}$. Let $\ba=[a_1(j), \cdots, a_{p_1 \epsilon^{-1}(j)} ]$.} \\
\STATE{$\bu^S = (u_1, u_2, \cdots u_n) \ \text{ where } \  u_i = \mathbf{1} \{ i \in S\} , \text{ for any } S \in \cS $.}
\STATE{$\bp = (p_1, \cdots, p_n), \ \mathbf{\widehat{Z}} = \{ \mathbf{\hat{z}}^S | \hat{\bz}^S = \left( \bp \circ \bu^S, \bu^S \right), S \in \cS \}$}
\STATE{ Initialize $a_i(0) = {p_1}^{-1} \epsilon \ \forall i , \ \hat{S} = \{ 1\},  j=0$.} 
\WHILE{ $j < T$} 
\STATE{ \textbf{Sample Selection:} Define $u(j)$ such that $\sum_{i=1}^{u(j)-1}a_i(j) \leq \frac{1}{2} \ , \sum_{i=1}^{u(j)}a_i(j) > \frac{1}{2} $}. Let \\
$K_{j+1} = 
\begin{cases}
{p_1}^{-1} \epsilon (u(j)-1)  & \mbox{ with probability }  Q(j), \textrm{ and} \\
{p_1}^{-1} \epsilon u(j) & \mbox{ with probability }  1 - Q(j), \\
\end{cases} 
$ where $Q(j) = \frac{\tau_2(j)}{\tau_1(j)+\tau_2(j)}$, and  \\
$\tau_1(j) = \sum_{i=u(j)}^{p_1\epsilon^{-1}}a_i(j) - \sum_{i=1}^{u(j)-1}a_i(j)$, \\
$\tau_2(j) = \sum_{i=1}^{u(j)}a_i(j) - \sum_{i=u(j)+1}^{p_1 \epsilon ^{-1}}a_i(j)$. 

\vspace{3mm}
\textbf{Noisy Observation:} 
$\mathbf{\hat{v}_{K_{j+1}}} = \small{(v_1, \cdots, v_n, -v_1K_{j+1} ,-v_2K_{j+1}, \cdots -v_nK_{j+1})}$ \\
$ \hat{\bz}^{\tilde{S}_{j+1}} = \apargmax_{\mathbf{\hat{z}}^S \in \mathbf{\hat{Z}}} \mathbf{\hat{v}}_{K_{j+1}} \cdot \mathbf{\hat{z}}^S $ 
\STATE{If $  R_{\bv}(\tilde{S}_{j+1}) > K_{j+1}, \textrm{ set } \hat{S} = \tilde{S}_{j+1}$}. \\
\vspace{3mm}
\textbf{Update posterior:} The posterior is updated using the Bayes rule. Note that $K_{j+1} = p_1^{-1} \epsilon u, u \in \mathbb{N}.$ Define 
$$  h(K_{j+1}) = \mathbf{1}\left\{K_{j+1} \leq  \frac{\mathbf{\hat{v}}_{K_{j+1}} \cdot \hat{\bz}^{\tilde{S}_{j+1}} }{v_0} \right\}, \quad\textrm{ and }\quad \tau = \sum_{i=1}^u a_i(j) - \sum_{ i = u+1}^{p_1 \epsilon^{-1}} a_i(j). $$

For  $i \leq u$, we have the update \\
{$a_i(j+1) = \begin{cases} 
\frac{2\beta}{1+\tau(\beta - \alpha)}a_i(j) & \mbox{if }  h(K_{j+1}) = 0, \textrm{ and} \\
\frac{2\alpha}{1-\tau(\beta - \alpha)}a_i(j) & \mbox{if }  h(K_{j+1}) = 1. \\
\end{cases}
$}

For  $i > u$, we have the update \\
{$a_i(j+1) = \begin{cases} 
\frac{2\alpha}{1+\tau(\beta - \alpha)}a_i(j) & \mbox{if }  h(K_{j+1}) = 0, \textrm{ and} \\
\frac{2\beta}{1-\tau(\beta - \alpha)}a_i(j) & \mbox{if }  h(K_{j+1}) = 1. \\
\end{cases}
$}

$j = j+1$

\ENDWHILE
\vspace{3mm}
\STATE{ $\theta_T$ is defined as the median of the posterior distribution i.e. $ \int_0^{\theta_T }\pi_T(x) = \frac{1}{2} $. }

\RETURN{$ \hat{S},$ and $ \hat{\theta}_T =  \max \left( \theta_T,  \revshatv \right)$} 
\end{algorithmic}
\end{algorithm}

\noindent\textit{Error Analysis of} \alsh: \revision{ We claim that the probability of error (i.e. the revenue given by the algorithm not being $\epsilon$-optimal) for \alsh \  decays exponentially with the number of iterations of the algorithm, stated formally in the below theorem. }

\begin{theorem}\label{thm:BZError}
	After $T$ iterations of \alsh \ (Algorithm \ref{alg:alsh}):
\revision{	$$\footnotesize P(|\hat{\theta}_{T} - \theta^*| > \epsilon) \leq \frac{p_1 -\epsilon}{\epsilon} \ \left(\frac{P_e}{2\alpha} + \frac{1- P_e}{2(1-\alpha)}\right)^{T},$$}
	where $\theta^*$ is the true optimal revenue,  $\hat{\theta}_{T}$ is the median of the posterior after the $T^{th}$ iteration, and  $\alpha<0.5$ is a parameter such that $P_e \leq \alpha$. 
\end{theorem}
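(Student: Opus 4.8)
The plan is to run a Burnashev--Zigangirov-style potential argument on the posterior $\pi_j$. Let $I_{i^*}$ be the width-$\epsilon$ bin containing the true optimum $\theta^*$ and let $a_{i^*}(j)$ be its posterior mass after $j$ rounds; write $\beta := 1-\alpha$ and $\lambda := \frac{P_e}{2\alpha} + \frac{1-P_e}{2\beta}$. I would track the ``odds against the truth'' $\Phi_j := \frac{1-a_{i^*}(j)}{a_{i^*}(j)}$, aiming to prove the one-step contraction $\mathbb{E}[\Phi_{j+1}\mid\mathcal{F}_j]\le\lambda\Phi_j$ (with $\mathcal{F}_j$ the history through round $j$) and then finish by iterating and applying Markov's inequality. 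Since $a_i(0)=\epsilon/p_1$ for every bin, $\Phi_0=(p_1-\epsilon)/\epsilon$, which is exactly the prefactor in the statement.

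First I would reduce the error event. If $a_{i^*}(T)>\tfrac12$, then the posterior mass strictly below $I_{i^*}$ is $<\tfrac12$ and the mass at or above $I_{i^*}$ is $>\tfrac12$, so the posterior median $\theta_T$ lies in $I_{i^*}$; since $\revshatv\le\theta^*$, the reported value $\hat\theta_T=\max(\theta_T,\revshatv)$ also lies in $I_{i^*}$, hence $|\hat\theta_T-\theta^*|<\epsilon$. Therefore $\{|\hat\theta_T-\theta^*|>\epsilon\}\subseteq\{a_{i^*}(T)\le\tfrac12\}=\{\Phi_T\ge1\}$, and it suffices to bound $P(\Phi_T\ge1)$. (Edge cases --- $\theta^*$ exactly on a bin boundary, and the initialization $\hat S=\{1\}$ --- are dispatched as for \ann.)

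The hard part is the contraction. Writing $a_i(j+1)=a_i(j)\,m_i$, each multiplier $m_i$ has the common denominator $1\pm\tau(\beta-\alpha)$ and a numerator equal to $2\alpha$ or $2\beta$ depending only on which side of the query threshold index $i$ sits and on the observed comparison bit $h$; hence $\Phi_{j+1}=\frac1{a_{i^*}(j)}\sum_{i\ne i^*}a_i(j)\,\frac{m_i}{m_{i^*}}$, the denominators cancel, and $m_i/m_{i^*}$ is $1$ when $i,i^*$ lie on the same side of the threshold and lies in $\{\alpha/\beta,\beta/\alpha\}$ otherwise. I would then take expectations in two stages: first over the MIPS failure, which is conditionally independent of $\mathcal{F}_j$ and flips $h$ with probability $p_e\le P_e$, so the opposite-side ratios have conditional mean $r:=(1-p_e)\tfrac\alpha\beta+p_e\tfrac\beta\alpha$; then over the randomized threshold $u\in\{u(j)-1,u(j)\}$ chosen with probability $Q(j)=\tau_2(j)/(\tau_1(j)+\tau_2(j))$. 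Using $Q(j)(\tau_1(j)+\tau_2(j))=\tau_2(j)$, all $\tau$-dependent cross terms collapse and one obtains, when $i^*\ne u(j)$, $\mathbb{E}[\Phi_{j+1}\mid\mathcal{F}_j]=\tfrac{1+r}{2}\Phi_j-\tfrac{1-r}{2}\le\tfrac{1+r}{2}\Phi_j$ (using $r\le1$, which holds since $p_e\le\alpha$). The one genuinely delicate case is $i^*=u(j)$ (truth in the median bin itself), where a residual nonnegative term $\tfrac{1-r}{2}\cdot\tfrac{2\tau_1(j)\tau_2(j)}{\tau_1(j)+\tau_2(j)}$ survives; here I would use the structural identity $\tau_1(j)+\tau_2(j)=2a_{u(j)}(j)=2a_{i^*}(j)$ together with AM--GM $\tau_1(j)\tau_2(j)\le a_{i^*}(j)^2$ to bound it by $\tfrac{1-r}{2}a_{i^*}(j)$, recovering $\mathbb{E}[\Phi_{j+1}\mid\mathcal{F}_j]\le\tfrac{1+r}{2}\Phi_j$ in this case too. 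Finally $\tfrac{1+r}{2}$ is increasing in $p_e$ and, using $\alpha+\beta=1$, equals $\lambda$ at $p_e=P_e$, so $\mathbb{E}[\Phi_{j+1}\mid\mathcal{F}_j]\le\lambda\Phi_j$.

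To conclude, the tower property gives $\mathbb{E}[\Phi_T]\le\lambda^T\Phi_0=\frac{p_1-\epsilon}{\epsilon}\lambda^T$, and Markov's inequality on the nonnegative variable $\Phi_T$ yields $P(\Phi_T\ge1)\le\mathbb{E}[\Phi_T]\le\frac{p_1-\epsilon}{\epsilon}\lambda^T$; combined with the reduction above this is exactly the claimed bound. I expect essentially all the difficulty to be concentrated in the conditional-expectation computation for $\Phi_{j+1}$, and in particular in the $i^*=u(j)$ case, which is precisely what the Burnashev--Zigangirov choice of the query point ($u(j)$, $Q(j)$, and the quantities $\tau_1,\tau_2$) is engineered to control; the rest (the event reduction and the Markov/tower step) is routine.
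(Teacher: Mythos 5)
Your proposal follows essentially the same route as the paper's proof: the paper tracks the same odds-against-the-truth potential $M_{\theta^*}(j)=(1-a_{u(\theta^*)}(j))/a_{u(\theta^*)}(j)$, bounds the per-step ratio $N_{\theta^*}(j+1)=M_{\theta^*}(j+1)/M_{\theta^*}(j)$ by $\frac{P_e}{2\alpha}+\frac{1-P_e}{2(1-\alpha)}$ via the same case analysis on the position of $u(j)$ relative to the true bin (exploiting $P_1(j)\tau_1(j)=P_2(j)\tau_2(j)$ and $\tau_1(j)+\tau_2(j)=2a_{u(j)}(j)$), and finishes with the identical tower-property and Markov-inequality argument starting from $M_{\theta^*}(0)=(p_1-\epsilon)/\epsilon$. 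The only cosmetic difference is that the paper exploits the one-sided nature of the approximate-MIPS error (an error can occur only when $K_{j+1}<\theta^*$) rather than your symmetric-flip model, which does not change the final bound.
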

\noindent\textit{Proof outline:} The proof of this Theorem is presented in \ref{app:algo_correct}. It builds on an analysis done by \cite{burnashev1974interval} and \cite{castro2006upper}. 
\revision{ We present a brief outline of the proof here. In this algorithm, we divide the overall search interval into bins and  maintain a distribution on the value of the optimal revenue lying in each of these bins. This distribution is updated in every iteration of the algorithm depending on the results of the approximate MIPS query.}

\revision{For the analysis, the key random variable is the odds (as per the current posterior distribution) of the true bin containing the optimal revenue. Accounting for the update mechanism of the algorithm, we derive how this variable changes in every iteration of the algorithm. Then, we obtain a bound on the ratio of the odds in successive iterations. Finally, we tie together the \textit{confidence}, i.e., the  probability that the current median (of the posterior distribution) is close to the true optimal revenue, with the change in ratio of the odds in every iteration. \qed
}

\begin{figure}
	\centering
	\includegraphics[width=0.35\textwidth]{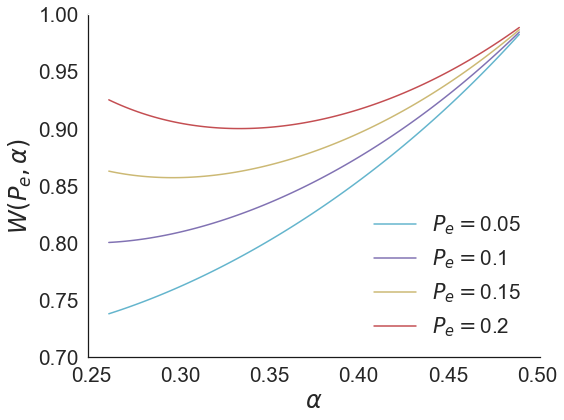}
	\caption{Plot of $W(P_e, \alpha)$ as $\alpha$ varies}
	\label{fig:P_e_alpha}
\end{figure}

\revision{
For a given failure probability $P_e$, the quantity $ W(P_e, \alpha)$ defined as  ${P_e}/{2\alpha} + \allowdisplaybreaks ({1- P_e})/{2(1-\alpha)}$ (plotted in Figure \ref{fig:P_e_alpha}) does not vary monotonically with $\alpha$. In many solution approaches to the \nmips \ problem, the exact value of $P_e$ is not known. This makes it challenging to find an optimal value of $\alpha$ to get fast convergence (note that our analysis does not allow setting $\alpha=\failprob$). Instead, often an upper bound on $P_e$ is available. In the presence of such an upper bound $P_{max}$, we can obtain a tractable expression for the number of iterations needed for any desired confidence level $\gamma$, where $0 < \gamma < 1$ as stated below:}

\revision{
\begin{lemma}
	Any desired confidence level $\gamma$ i.e. $ P(|\hat{\theta}_{T} - \theta^*| > \epsilon) \leq \gamma$ can be achieved with $T= \lceil  \log_{0.5 + \sqrt{P_{max}}} (\gamma \epsilon/(p_1 -\epsilon))   \rceil$ iterations of \alsh, where $P_{max} <  0.25$ is an upper bound on the failure probability $P_e$ of the $\apargmax$ operation.\label{lemma:alsh-time-complexity}
\end{lemma}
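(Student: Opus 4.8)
\noindent\emph{Proof plan.} The plan is to start from the explicit bound in Theorem~\ref{thm:BZError} and optimize the free parameter $\alpha$ (subject to $P_e \le \alpha < 0.5$) so that the quantity $W(P_e,\alpha) = \tfrac{P_e}{2\alpha} + \tfrac{1-P_e}{2(1-\alpha)}$ appearing there is as small as possible \emph{uniformly} over all admissible failure probabilities $P_e \le P_{max}$. Since only the upper bound $P_{max}$ is known to the algorithm, I must pick an $\alpha$ that is valid for every $P_e \in [0,P_{max}]$, i.e. $\alpha \ge P_{max}$, while also keeping $\alpha < 0.5$. The candidate choice is $\alpha = \sqrt{P_{max}}$: it satisfies $\alpha \ge P_{max}$ because $P_{max}\le 1$, and it satisfies $\alpha < 0.5$ precisely because $P_{max} < 0.25$, which is exactly the hypothesis of the lemma.

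Next I would control $W(P_e,\sqrt{P_{max}})$ as $P_e$ ranges over the unknown interval $[0,P_{max}]$. With $\alpha$ fixed, $W$ is affine in $P_e$ with slope $\tfrac{1}{2\sqrt{P_{max}}} - \tfrac{1}{2(1-\sqrt{P_{max}})}$, which is positive iff $\sqrt{P_{max}} < \tfrac12$, again guaranteed by $P_{max} < 0.25$. Hence $W(\cdot,\sqrt{P_{max}})$ is increasing on $[0,P_{max}]$ and attains its maximum at $P_e = P_{max}$. Evaluating there and using the factorization $1 - P_{max} = (1-\sqrt{P_{max}})(1+\sqrt{P_{max}})$ collapses the expression: $\tfrac{P_{max}}{2\sqrt{P_{max}}} + \tfrac{1-P_{max}}{2(1-\sqrt{P_{max}})} = \tfrac{\sqrt{P_{max}}}{2} + \tfrac{1+\sqrt{P_{max}}}{2} = \tfrac12 + \sqrt{P_{max}}$, which is exactly the base of the logarithm in the claimed iteration count; note also $\tfrac12 + \sqrt{P_{max}} < 1$, so this is a genuine contraction factor.

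Plugging this back, Theorem~\ref{thm:BZError} gives $P(|\hat\theta_T - \theta^*| > \epsilon) \le \tfrac{p_1-\epsilon}{\epsilon}\,\big(\tfrac12 + \sqrt{P_{max}}\big)^{T}$. To force the right-hand side below $\gamma$, I would take logarithms with base $\tfrac12 + \sqrt{P_{max}} < 1$ (which reverses the inequality), obtaining the requirement $T \ge \log_{0.5 + \sqrt{P_{max}}}\!\big(\gamma\epsilon/(p_1-\epsilon)\big)$; taking the ceiling yields the stated $T$, and one checks $\gamma\epsilon/(p_1-\epsilon) < 1$ so the logarithm is positive and $T$ is well defined.

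As for the main obstacle: there is no deep step here, since the argument is essentially a one-parameter optimization of the bound already furnished by Theorem~\ref{thm:BZError}. The only points requiring a little care are (i) recognizing that $\alpha = \sqrt{P_{max}}$ is simultaneously feasible for every $P_e \le P_{max}$ and the choice that makes the algebra telescope, and (ii) justifying the worst-case-over-$P_e$ bound via the monotonicity of $W$ in $P_e$ (necessary because $P_e$ itself is not available, only $P_{max}$). Everything else is the simplification using $1-P_{max} = (1-\sqrt{P_{max}})(1+\sqrt{P_{max}})$ followed by a single logarithm, and one may additionally remark, consistent with Figure~\ref{fig:P_e_alpha}, that this $\alpha$ need not be the pointwise minimizer of $W$ for the true $P_e$ — it is merely the best provable choice given only $P_{max}$.
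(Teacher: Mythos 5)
Your proposal is correct and follows essentially the same route as the paper's proof: set $\alpha=\sqrt{P_{max}}$ in the bound of Theorem~\ref{thm:BZError}, replace $P_e$ by its worst case $P_{max}$, simplify to the contraction factor $0.5+\sqrt{P_{max}}$, and invert the resulting geometric bound. Your write-up is in fact slightly more careful than the paper's, since you explicitly justify the monotonicity of $W(\cdot,\sqrt{P_{max}})$ in $P_e$ (needed for the worst-case substitution) and the feasibility of $\alpha=\sqrt{P_{max}}$, both of which the paper leaves implicit.
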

}

Similar to \ann \ and \aheu, the number of iterations grows logarithmically in the desired accuracy level for \alsh. 
\revision{
\begin{corollary}
	For a confidence level $\gamma$, the time complexity of \alsh{} \ is $O\left(\noofprod \noofset^ {\rho}\log((p_1 - \epsilon)/\gamma\epsilon)\log(1/\failprob)\right)$ and the space complexity is $O\left(\noofprod \noofset^ {(1+\rho)} \log((p_1 - \epsilon)/\gamma\epsilon) \log(1/\failprob)\right)$ for some $\rho < 1$.
\end{corollary}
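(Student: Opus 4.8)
\emph{Proof proposal.} This is a composition corollary: the online running time of \alsh{} equals the number of iterations times the cost of one iteration, while the working space equals the number of independent \nmips{} structures that must be kept alive (one per iteration, because the error analysis needs independent comparison errors) times the space of a single such structure. The plan is therefore to (i) read the iteration count off Lemma~\ref{lemma:alsh-time-complexity}, (ii) bound the per-iteration cost by that of one \nmips{} query via the reduction of Algorithm~\ref{alg:ann_comp}, Lemma~\ref{lemma:nn2mips}, and the \cnn{} data structures recalled in Section~\ref{sec:preliminaries}, and (iii) multiply. For the iteration count, Lemma~\ref{lemma:alsh-time-complexity} gives $T=\lceil\log_{0.5+\sqrt{P_{max}}}(\gamma\epsilon/(p_1-\epsilon))\rceil$ for confidence $\gamma$; since $P_{max}<0.25$ is fixed, the base is a constant in $(0,1)$, so $T=\Theta\!\left(\log\frac{p_1-\epsilon}{\gamma\epsilon}\right)$ after a change-of-base constant.

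For the per-iteration cost, observe that inside the \textbf{while} loop of Algorithm~\ref{alg:alsh} the dominant step is the $\apargmax$, i.e. one \nmips{} query. The reduction (identical to the one in Algorithm~\ref{alg:ann_comp}) embeds each feasible assortment $S\in\cS$ as a point $\hat{\bz}^S\in\mathbb{R}^{2\noofprod}$ and builds a query of the same dimension, so the ambient dimension is $2\noofprod=O(\noofprod)$ and the number of points is $N=|\cS|=\noofset$. Taking the small $\nu>0$ permitted in Section~\ref{subsec:alsh} so that $c=1+\nu$ and $\rho:=\rho(1+\nu)<1$, Lemma~\ref{lemma:nn2mips} lets us answer the query with a \nonenn{} query, which by Section~\ref{sec:preliminaries} costs $O\!\left(\noofprod\,\noofset^{\rho}\log(1/\failprob)\right)$ time. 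The remaining per-iteration work — forming $u(j)$, $Q(j)$, $\tau_1(j),\tau_2(j),\tau$, evaluating $R_{\bv}(\tilde S_{j+1})$ in $O(\noofprod)$ time, and rescaling the $O(p_1/\epsilon)$ posterior coefficients — is lower order (under the mild regime $p_1/\epsilon=O(\noofset^{\rho})$, or else it adds a harmless $T\cdot p_1/\epsilon$ term). Multiplying the query cost by $T$ yields the stated time bound $O\!\left(\noofprod\,\noofset^{\rho}\log\frac{p_1-\epsilon}{\gamma\epsilon}\log\frac{1}{\failprob}\right)$.

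For the space, recall (as noted just before Algorithm~\ref{alg:alsh}) that the independence hypothesis behind Theorem~\ref{thm:BZError} forces us to instantiate $T$ independent \nmips{} data structures and consume one per iteration. Each is built over $N=\noofset$ points of dimension $O(\noofprod)$ and, by the construction of Section~\ref{sec:preliminaries} with the internal trade-off parameter fixed to a constant (and the $\log(1/\failprob)$ amplification overhead that also appears in the query time), occupies $O\!\left(\noofprod\,\noofset^{1+\rho}\log(1/\failprob)\right)$ space. Summing over the $T=\Theta\!\left(\log\frac{p_1-\epsilon}{\gamma\epsilon}\right)$ structures gives $O\!\left(\noofprod\,\noofset^{1+\rho}\log\frac{p_1-\epsilon}{\gamma\epsilon}\log\frac{1}{\failprob}\right)$, as claimed. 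The one-time cost of building these $T$ structures is preprocessing and, as for \ann{} and \aheu{}, is not charged to the online complexity.

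I do not expect a genuine obstacle: the work is routine once Lemma~\ref{lemma:alsh-time-complexity} and the MIPS-query bounds are in hand. Two points need care. First, the confidence parameter $\gamma$ in the statement collides notationally with the unrelated space/quality trade-off parameter written $\gamma$ in the \cnn{} bound of Section~\ref{sec:preliminaries}; the latter must be renamed and pinned to a constant so that it disappears into the $O(\cdot)$. Second, one should double-check that the $O(p_1/\epsilon)$ posterior-update cost per iteration does not dominate the \nmips{} query — this is the only place the clean multiplicative form could fail, and it is handled either by the mild assumption above or by stating the bound with an explicit additive term.
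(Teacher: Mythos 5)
Your proposal is correct and follows the same route the paper intends: the corollary is an immediate composition of the iteration count from Lemma~\ref{lemma:alsh-time-complexity} with the per-query time $O(\noofprod\noofset^{\rho}\log(1/\failprob))$ and per-structure space of the \nmips{} solver from Section~\ref{sec:preliminaries}, multiplied by the $T$ independent data structures required for the independence assumption in Theorem~\ref{thm:BZError} (the paper gives no separate proof, treating it as exactly this multiplication). Your two cautionary notes — the clash between the confidence $\gamma$ and the \cnn{} trade-off parameter $\gamma$, and the unstated assumption that the $O(p_1/\epsilon)$ posterior update is dominated by the \nmips{} query — are legitimate and, if anything, more careful than the paper itself.
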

}

\revision{Similar to the analysis of \aheu{}, both the iteration complexity and the time complexity depend on the specific solution technique used for \nmips \ (for instance, both depend explicitly on $\failprob$). While a specific approximating MIPS solution approach was assumed in stating above time-complexity, any other suitably chosen \nmips \ technique would also work. Finally, before discussing the empirical performances of the methods, we summarize the guarantees derived in this section in Table~\ref{tab:algo-complex-summary}.
}

\revision{}

\begin{table}[]
	\begin{center}
		\resizebox{\textwidth}{!}{
			\begin{tabular}{ccccc}
				\hlineB{2}
				Constraints & Algorithm & No. of Iterations & Time Complexity & Space Complexity \\
				\hlineB{2}
				Capacity & \ann & $ \lceil \log \frac{p_1}{\epsilon}  \rceil  $ & $O(\noofprod \log C \log \frac{p_1}{\epsilon}  )$  & $O(n)$ \\
				General & \ann &  $ \lceil \log \frac{p_1}{\epsilon}  \rceil  $ & $O(\noofprod N \log \frac{p_1}{\epsilon}  )$  & $O(nN)$ \\
				General & \aheu & $\left\lceil \log \frac{p_1}{\epsilon - 2(\nu^2 + 2\nu)} \right \rceil$ &  $O\left(\noofprod \noofset^{\rho(\nu)}  \log \frac{p_1}{\epsilon - 2(\nu^2 + 2\nu)} \right)$ & $O( \noofprod \noofset^ {(1+\rho(\nu))} )$ \\
				General & \alsh & $\lceil  \log_{1/(0.5 + \sqrt{P_{max}}) } \frac{p_1 -\epsilon}{\gamma \epsilon} \rceil$ & $O\left(\noofprod \noofset^ {\rho}\log \frac{p_1 - \epsilon}{\gamma\epsilon}\log\frac{1}{\failprob} \right)$ & $O(\noofprod \noofset^ {(1+\rho)} \log \frac{p_1 - \epsilon}{\gamma\epsilon}\log\frac{1}{\failprob}  )$ \\
				\hlineB{2}
			\end{tabular}
		}
	\end{center}
	\caption{Table summarizing the runtime and space complexities of the proposed algorithms. They assume the use of an appropriate \nmips \ solver for the $\apargmax$ operation.}
	\label{tab:algo-complex-summary}
\end{table}{}

\section{Experiments}\label{sec:experiments}

\revision{We empirically validate the performance of the proposed algorithms using real and synthetic datasets, when there is no structure on the feasible set as well as in the capacity constrained setting. Before discussing the experimental results, we briefly discuss: (a) the chosen benchmarks and general settings, (b) a modification needed in \aheu{}, and (c) the datasets used.}

\noindent\textit{Benchmarks and Experimental Settings}: Optimizing over a general collection of feasible assortments cannot be efficiently handled by  integer programming formulations (unless there is an efficient representation of the feasible assortments) or other specialized approaches.  Thus, we compare the performance of our algorithms with exhaustive search in this setting. In the capacitated case, we compare these with currently known algorithms for this problem -  \textsc{Static-MNL}~\citep{Paat}, ADXOpt~\citep{jagabathula2014assortment} and a linear programming (LP) formulation~\citep{sumida2020revenue}. \revision{ All experiments are run on a 12 core 32GB 64-bit intel machine (i7-8700 \@ 3.2GHz) with Python 2.7}. We use \textsc{NearestNeighbors} \citep{scikit-learn} for solving MIPS exactly, and  CPLEX 12.7 for solving the LP. \revision{ Performance is measured in terms of the mean computation time and mean relative error in the revenue obtained. The averaging here is over 50 Monte Carlo runs. The tolerance parameter $\epsilon$ is set to $0.1$ for \ann ,  \aheu , and \alsh  \ in all experiments, and a fixed value $\alpha=0.1$ was chosen for \alsh.}

\noindent\textit{Modifying \aheu}: \revision{The approximate \nmips \ solver we use relies on \textsc{LSHForest} \citep{scikit-learn}, which tunes the  approximation parameter $\nu$ in a data-driven way. Hence, to avoid any explicit dependence of this parameter on \aheu \, we redefine it to be the same as \ann \ but with the approximate \nmips \ solver (see Algorithm~\ref{alg:aheu_simpler} in \ref{app:algo}). 
A consequence of this is that the reported performance numbers will be slightly better in terms of time complexity and slightly worse in terms of solution quality and relative error in revenue. As we show below, the redefined \aheu{} and the \alsh \ algorithm capture the key aspects that we hoped for: better performance in terms of computation time due to fast solving of MIPS instances, while working with an arbitrary collection of feasible assortments, and without much impact on revenue.}

\noindent\textit{Datasets}: \revision{We use real and  semi-synthetic data sets. The real dataset~\citep{taFeng} contains transaction data from a Chinese grocery story between November, 2000 and February, 2001. The dataset contains information about the bundles of products purchased in each transaction along with the prices of the products. We use \textsc{FPgrowth}~\citep{fournier2014spmf} to generate frequent itemsets with appropriate minimum supports and then prune ones with low cardinality (e.g., singletons) to obtain our collection of relevant assortments. We also use the transaction data to estimate an MNL model. The no-purchase utility cannot be estimated with this data, and is defined such that the probability of no purchase is 30\% when all the products are displayed.}
	
For generating additional problem instances in the general setting, we use four publicly available transaction datasets, viz., retail, foodmart, chainstore and e-commerce transaction logs~\citep{fournier2014spmf} to create collections of general assortments. Similar to the Ta Feng dataset, we generate frequent itemsets with appropriate minimum supports using \textsc{FPgrowth}. We augment these by generating synthetic prices and utilities that are inversely correlated. Table~\ref{table:freq-itemset-stat} describes some statistics of the assortments generated. Plots for experiments using additional semi-synthetic data (e.g., with prices independent of utilities with the aforementioned transaction datasets, as well as instances generated using the Billion prices dataset) are provided in \ref{sec:app_figs}.

\begin{table}[h]
	\begin{center}
		\resizebox{\columnwidth}{!}{%
			\begin{tabular}{ lccccc } 
				\toprule
				Dataset & Ta Feng & Retail & Foodmart & Chainstore & E-commerce \\
				\midrule 
				Number of transactions & 119390 & 88162 &4141 & 1112949 & 540455 \\ 
				Number of items & 23778
				& 3160 & 1559 & 321 & 2208 \\ 
				Number of general assortments & 71722  & 80524 & 81274 & 75853 & 23276 \\ 
				Size of largest assortment &  16 & 12 & 14 & 16 & 8\\ 
				Size of smallest assortment & 8 & 3 & 4 & 5 & 3\\ 
				\bottomrule
			\end{tabular}
		}%
	\caption{Assortments generated from transactional datasets. 
	\label{table:freq-itemset-stat}}
	\end{center}
\end{table}
\vspace{-.5cm}

\subsection{Experiments with General Assortments}

\revision{The first experiment explores how the algorithms fare over the full sized real and semi-synthetic instances. The performance of the algorithms are plotted in Figure~\ref{fig:gen-ast-real-inv} using both the real dataset and four semi-synthetic datasets detailed above. As can be inferred, all three proposed algorithms are $2$-$3\times$ faster than exhaustive search. In particular, even \ann \ which has higher time complexity than exhaustive search is faster. \aheu \ and \alsh \ are faster than \ann \ while making small sacrifices in the revenue. We observe that \alsh \ has better performance in terms of revenue than \aheu, while being comparable in computation time. }

\begin{figure}[ht]
	\centering
	\includegraphics[width=.48\textwidth]{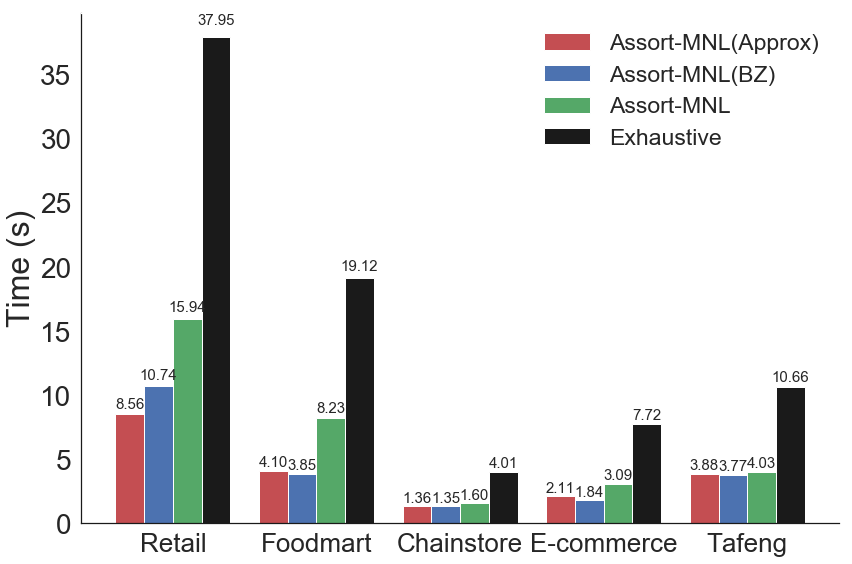}
	\includegraphics[width=.48\textwidth]{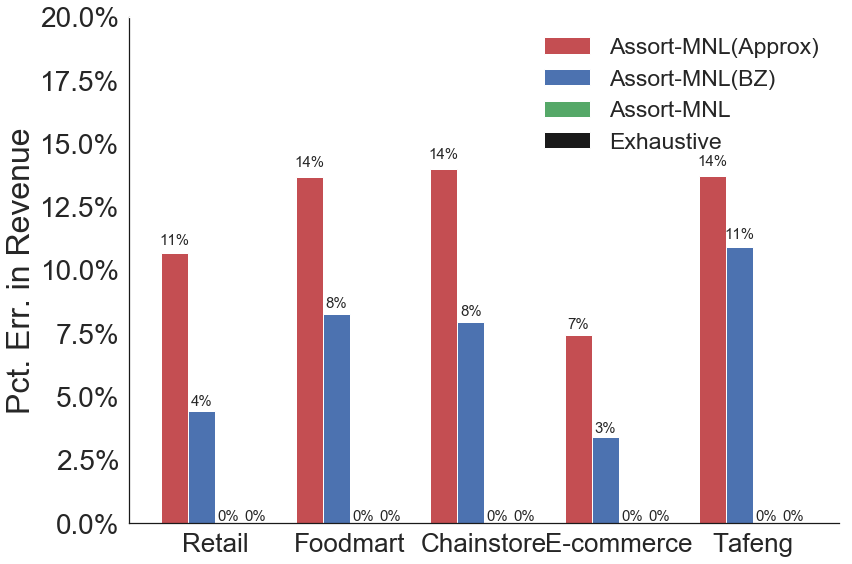}
	\caption{Performance of algorithms over general assortment instances. Apart from the Ta Feng dataset, we explicitly generate negatively correlated prices and utilities for the remaining.  \label{fig:gen-ast-real-inv}}
\end{figure}

\revision{ In the second experiment, we focus on how the performance of the algorithms varies as the number of assortments in the feasible set increases.  We focus on the Ta Feng dataset and vary the size of the set of feasible assortments in the range $\{100,200,400,800,1600,3200,6400, \allowbreak 12800, 25600, 51200\}$. Here, the set of feasible assortments is picked uniformly at random from the set of frequent itemsets.}
\revision{ We observe in Figure~\ref{fig:tafeng-gen-ast-real-price} that throughout the considered range of sizes of feasible assortments, our algorithms are much faster than exhaustive search. The difference becomes more pronounced as the size increases. In terms of error in revenue, both \aheu \ and \alsh \ have similar trends, and the revenue errors are consistently average around $12$\%.}
 
\begin{figure}[ht]
	\centering
	\includegraphics[width=.48\textwidth]{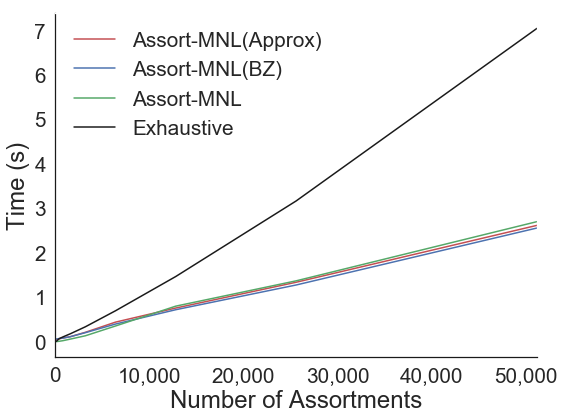}
	\includegraphics[width=.48\textwidth]{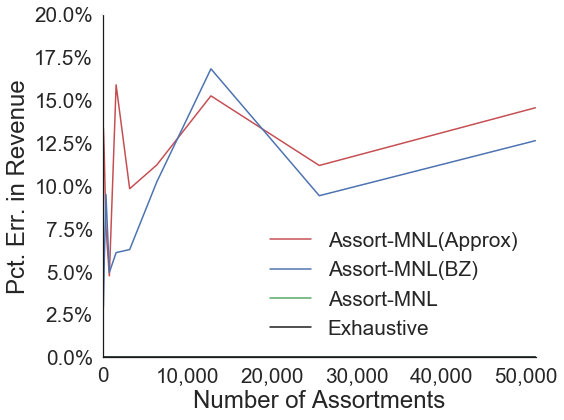}
	\caption{Performance of algorithms over general data-driven instances derived from the Ta Feng dataset. The x-axis corresponds to the number of feasible/relevant assortments.
		\label{fig:tafeng-gen-ast-real-price}}
\end{figure} 

From the above findings, we see the promise of our algorithms for \emph{near-real-time} web-scale assortment planning applications when the feasible region is not compactly represented (for additional results on similar lines, see \ref{sec:app_figs}).

\subsection{Experiments with Cardinality-constrained Assortments}

\begin{figure}[ht]
	\centering
	\includegraphics[width=.48\textwidth]{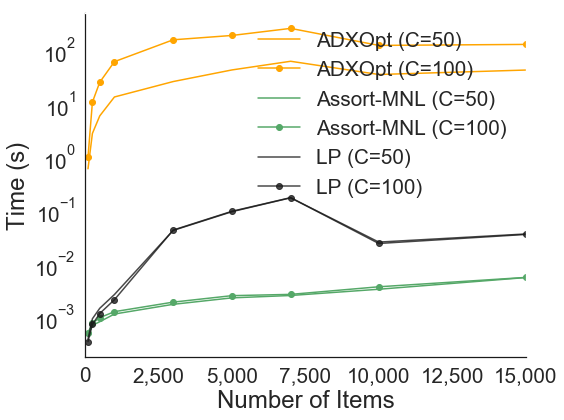}
	\includegraphics[width=.48\textwidth]{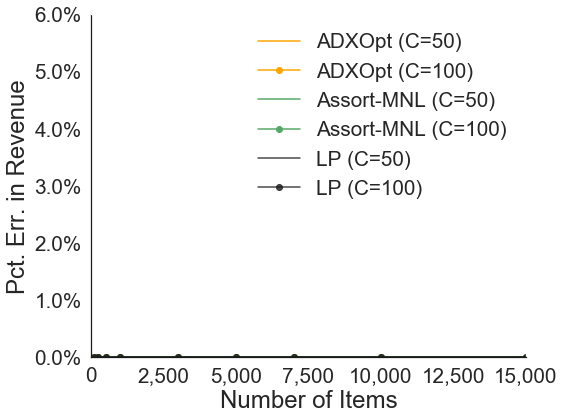}
	\caption{Performance of algorithms in the capacitated setting over instances derived from the Ta Feng dataset. The x-axis corresponds to the number of items.
		\label{fig:tafeng-cap-real-price-prod}}
\end{figure} 

\begin{figure}[ht]
	\centering
	\includegraphics[width=.32\textwidth]{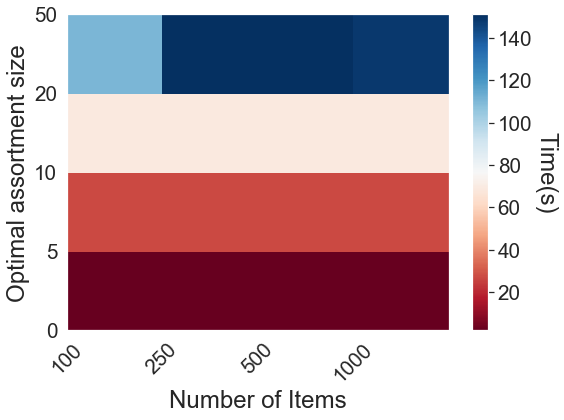}
	\includegraphics[width=.32\textwidth]{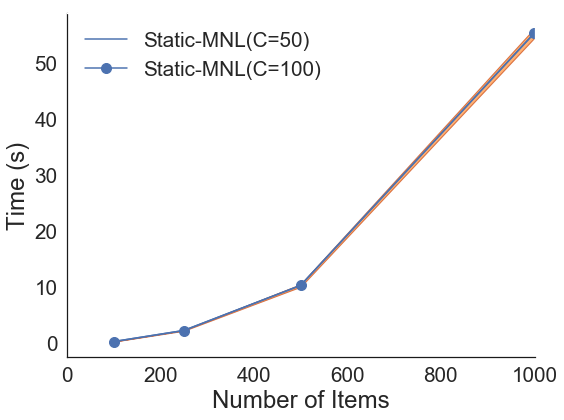}
	\includegraphics[width=.32\textwidth]{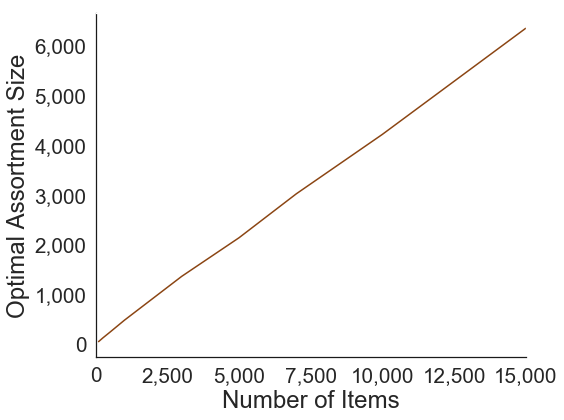}
	\caption{Performance of ADXOpt (left) and Static-MNL (center) when the cardinality constraint is $100$. The intensity (seconds) shows that the performance of ADXOpt is dependent on the size of the optimal assortment. \textsc{Static-MNL}'s timing results are an order of magnitude worse even for moderate-sized instances. The right plot shows the mean size of the optimal \emph{unconstrained} assortment. Instances for these results are generated using the Ta Feng dataset. 
		\label{fig:cap-real-price-prod-adxopt-static-mnl}}
\end{figure}

We explore how \ann \ fares when compared to specialized algorithms viz., ADXOpt, LP and \textsc{Static-MNL} in the capacity-constrained setting (exhaustive search is not considered because its performance is an order of magnitude worse than all these methods) \revision{using the Ta Feng dataset (additional results using the Billion prices dataset are in \ref{sec:app_figs}). We consider two values of the cardinality constraint - $50$ and $100$.  The number of items was varied in the range $\{100, 250, 500, 1000, 3000, 5000, 7000,10000,\allowbreak 15000\}$. }
 
\revision{Figure~\ref{fig:tafeng-cap-real-price-prod} shows the performance of all the algorithms considered except \textsc{Static-MNL}. First, we observe that for both values of the  cardinality constraints,  \ann \ (the only approximate algorithm) has revenue error very close to 0. In terms of runtime,  \ann \ runs much faster than both \textsc{ADXOpt} and LP. For instance, when the number of items is $15000$, \ann \ computes solutions in time that is almost \emph{four orders of magnitude} less compared to ADXOpt and \emph{one order of magnitude} less compared to LP on average (we discount the time needed to set up the LP instance and only report the time to solve the instance otherwise the relative gains would be much higher). Further, by making a comparison across two different values of cardinality constraint, we observe that the runtime does not vary much with the cardinality for \ann \ and the LP based solution. But owing its greedy search nature, the runtime of the ADXOpt algorithm increases as value of the cardinality constraint is increased.}

We have re-plotted the timing performance of ADXOpt on the Ta Feng dataset separately in Figure~\ref{fig:cap-real-price-prod-adxopt-static-mnl}(left), as its running time (plotted as intensity) varies as a function of the size of the optimal assortment, which is not the case with \ann \ and LP. If the instance happens to have a small optimal assortment, ADXOpt can get to this solution very quickly. On the other hand, it spends a lot of time when the optimal assortment size is large. This is illustrated through an intensity plot as opposed to a one-dimensional timing plot.  
\revision{We plot the timing performance of \textsc{Static-MNL} separately in Figure~\ref{fig:cap-real-price-prod-adxopt-static-mnl}(center). Its performance is close to ADXOpt and much worse than the \ann \ and LP methods. Further, owing to the stringent memory requirements of the algorithm, we had to limit the size of the problem instances to number of items less than $1000$. }

\revision{Further, we ensure that the cardinality constraint (for both values - $50$ and $100$) is not a vacuous constraint by obtaining the size of the optimal \emph{unconstrained} assortment, see Figure \ref{fig:cap-real-price-prod-adxopt-static-mnl}(right). We observe that the size of the optimal unconstrained assortment is larger than our chosen constraints on average.}
 
In summary, \ann \ is competitive with the state of the art algorithms, viz., ADXOpt, \textsc{Static-MNL} and LP in the capacitated setting. Further, \ann,  \aheu \ and \alsh \ are highly scalable in the general feasible assortment setting. All our algorithms provide efficient solutions to vastly larger assortment planning instances under the MNL choice model. For instance, we show computational results when instances have a number of items that is of the order of $\sim10^5$ easily, whereas the regime in which experiments of the current state of the art methods were carried out is less than $\noofprod \sim 10^3$, thus representing two orders of magnitude improvement. See additional experimental results providing similar conclusions in \ref{sec:app_figs}. 
\revision{Overall, these algorithms provide a way to efficiently trade-off accuracy with time and make way for a significant step towards meeting the ambitious time requirements on computation in online/digital platforms today.}
\section{Concluding Remarks}
\label{sec:conclude}

We proposed multiple efficient algorithms that solve the assortment optimization problem under the multinomial logit (MNL) purchase model even when the feasible assortments cannot be compactly represented. In particular, we motivate how frequent itemsets and business rules can be used to define candidate assortments, making the planning problem data-driven. Our algorithms are iterative and build on binary search and fast methods for maximum inner product search (MIPS) to find near optimal solutions for large scale instances. Though solving large scale instances efficiently under flexible assortments is a significant gain, there is scope for extending this work in many ways. For instance, studies by psychologists have revealed that buyers are affected by the assortment size as well as how frequently they change over the course of their interactions~\citep{iyengar2000choice}. Extending our methods to multi-purchase models is also a promising direction.

Our algorithms are meaningful both in the online and offline setups: in online setups such as e-commerce applications, our algorithms scale as the problem instances grow. In offline setups, our algorithms afford flexibility to the decision maker  by allowing optimization over arbitrary assortments, which could be driven by business insights or transaction log mining.

\section*{Funding}
This research did not receive any specific grant from funding agencies in the public, commercial, or not-for-profit sectors.

\bibliography{assort_lsh}

\newpage
\appendix
\centerline{\textbf{{\large SUPPLEMENTARY}}}
\vspace{2in}
\revision{
In this supplementary, we provide the following additional details, including proofs of claims made:
\begin{itemize}
	\item \ref{sec:additional-data-structures} discusses the details of a specific data structure, namely locality sensitive hashing (LSH), and how it enables solving \nmips{} efficiently.
	\item \ref{app:algo} provides proofs for claims made in Section~\ref{sec:optimize}, as well as provides some additional details on the design of the three algorithms.
	\item \ref{sec:app_figs} provides additional computation evidence that shows how the proposed methods compared against benchmarks in both the general setting as well as the capacitated setting using multiple semi-synthetic setups.
\end{itemize}
}
\newpage

\section{Additional Related Work}\label{sec:additional-related-work}

Here we discuss some additional references that are related to this work.

\noindent\textit{Capacitated Setting}: While no general exact methods other than integer programming exist when the feasibility constraints are not unimodular, the special case of capacity constrained setting has been well studied algorithmically in prior work. The key approaches in this special case are: (a) a linear programming (LP) based approach by ~\cite{sumida2020revenue} (time complexity O($\noofprod^{3.5}$) if based on interior point methods), (b) the \textsc{Static-MNL} algorithm  (time complexity O($\noofprod^2\log\noofprod$)) by ~\cite{Paat}, and (c) ADXOpt (time complexity $O(n^2bC)$, where $n$ is the number of items, $C$ is the maximum size of the assortment and $b = \min \{C, n-C+1 \}$ for the MNL choice model) by ~\cite{jagabathula2014assortment}. Note that ADXOpt is a local search heuristic designed to work with many different choice models, but is not guaranteed to find the optimal when a general collection of feasible assortments are considered under the MNL model. 

\noindent\textit{Other Choice Models}: There are many choice models that have been proposed in the literature~\citep{train2009discrete} including Multinomial Logit (MNL), the mixture of MNLs (MMNL) model, the nested logit model, multinomial probit, the distribution over rankings model~\citep{farias2013nonparametric}, exponomial ~\citep{alptekinouglu2016exponomial} choice model and the Markov chain model~\cite{desir2015capacity}. Though some of the above choice models are more expressive than the MNL choice model, they are harder to estimate reliably from data as well as lead to hard assortment optimization problems.  Offline methods that work with such behavior models include a mixed integer programming approach in ~\cite{Vel} (NP-hard) for the distribution over ranking model, ~\cite{davis2014assortment} for the nested logit, ~\cite{rusmevichientong2014assortment} for the mixture of MNLs model, and ~\cite{desir2015capacity} for the Markov chain model, to name a few. Capacity and other business constraints have also been considered for some of these  choice models (see~\cite{sumida2020revenue}). It is not clear how to scale these algorithmic approaches or extend these methods when the feasible assortments cannot be compactly represented.

\noindent\textit{Online Assortment Planning}: Finally, note that although one could look at continuous estimation and optimization of assortments in an online learning framework~\citep{shipra}, decoupling and solving these two operations separately brings in a lot of flexibility for both steps, especially because the optimization approaches can be much better tuned for performance, which is our focus. Even when one is interested in an online learning scheme, efficient algorithms (such as the ones we propose) can be easily used as drop-in replacements for subroutines in regret minimizing online assortment optimization schemes.

\noindent\textit{Empirical Success of the MNL Model}:  The MNL model is arguably one of the most commonly used choice model, due to its simplicity.  In a recent work, viz., ~\cite{feldman2018taking} that succeeds and complements this work, the authors comprehensively demonstrate the significance of using MNL models to optimize and show assortments on Alibaba's Tmall/Taobao e-commerce platforms using field experiments. They show that although the MNL based model has lower predictive power compared to a richer machine learning based choice behavior model, it is able to generate significantly higher revenue ($\sim 28\%$) compared to the latter due to explicitly modeling substitution behavior. They also show how being able to compute optimal assortments for the MNL in real-time is also a significant contributing factor in their deployment. Another paper that points out to advantages of discrete choice models over machine learning models for predicting choice behaviour is \cite{raval2019machine}. They empirically observe that traditional econometric discrete choice models perform much better when there has been a major change in the choice environment.

\noindent\textit{Fast Solution to MIPS and Similarity Search}: \nmips \ problems can be solved using methods based on Locality Sensitive Hashing (LSH)~\citep{neyshabur2015symmetric} and variants such as \textsc{L2-ALSH(SL)}~\citep{shrivastava2014asymmetric} and \textsc{Simple-LSH}~\citep{neyshabur2015symmetric}. Approximate methods have also been proposed for related problems such as the Jaccard Similarity (JS) search in the information retrieval literature. For instance, one can find a set maximizing Jaccard Similarity with a query set using a technique called \textsc{Minhash} \citep{broder1997resemblance}. The \cnn \ problem can be addressed using \textsc{L2LSH}~\citep{datar2004locality} and many other solution approaches (see for instance~\cite{li17d} and references within). 
\section{Data Structures for Solving \nmips}\label{sec:additional-data-structures}

In this section, we provide additional details about the underlying data structures that allow for solving the \nmips{} and \cnn{} problems in an efficient manner. In particular, we will focus on locality sensitive hashing.

\subsection{Locality Sensitive Hashing (LSH)}\label{subsec:LSH}

LSH~\citep{andoni2008near,har2012approximate} is a technique for finding vectors from a known set of vectors (referred to as the search space), that are  `similar' (i.e, \emph{neighbors} according to some metric) to  a given query vector in an efficient manner. It uses hash functions that produce similar values for input points (vectors) which are  similar to each other as compared to points which are not.  

We first formally define the closely related (but distinct) problems of finding the \textit{near neighbor} and the \textit{nearest neighbor} of a point from a set of points denoted by $P$ (with $|P| = N$).

\begin{definition} The $(c,r)$-NN (approximate \emph{near} neighbor) problem with failure probability $f \in (0,1)$ is to construct a data structure over a set of points $P$ that supports the following query:  given point $q$, if $\min_{p \in P}d(q,p) \leq r$, then report some point $p' \in P\cap \{p: d(p,q) \leq cr\}$ with probability $1-f$. Here, $d(q,p)$ represents the distance between points $q$ and $p$ according to a metric that captures the notion of neighbors. Similarly, the $c$-NN (approximate \emph{nearest} neighbor) problem with failure probability $f \in (0,1)$ is to construct a data structure over a set of points $P$ that supports the following query: given point $q$, report a $c$-approximate nearest neighbor of $q$ in $P$ i.e., return $p'$ such that $d(p',q) \leq c\min_{p \in P}d(p,q)$ with probability $1-f$.
\end{definition}

As mentioned earlier, the data structures that can solve the above search problems efficiently are based on hash functions, which are typically endowed with the following properties:

\begin{definition}\label{def:hashfunctions}
A $(r,cr,P_1,P_2)$-sensitive family of hash functions ($h \in \mathcal{H}$) for a metric space $(X,d)$ satisfies the following properties for any two points $p,q \in X$ and $h$ chosen uniformly at random from  $\mathcal{H}$:
\begin{itemize}
\item If $d(p,q) \leq r$, then $Pr_{\mathcal{H}}[h(q) = h(p)] \geq P_1$, and
\item If $d(p,q) \geq cr$, then $Pr_{\mathcal{H}}[h(q) = h(p)] \leq P_2.$
\end{itemize}
\end{definition}

In other words, if we choose a hash function uniformly from $\mathcal{H}$, then they will evaluate to the same value for a pair of points $p$ and $q$ with high ($P_1$) or low ($P_2$) probability based on how similar they are to each other. The following theorem states that we can construct a data structure that solves the approximate near neighbor problem in sub-linear time.

\begin{theorem} [\cite{har2012approximate} Theorem 3.4] Given a $(r,cr,P_1,P_2)$-sensitive family of hash functions, there exists a data structure for the $(c,r)$-NN (approximate near neighbor problem) with the Euclidean norm as the distance metric, over $n$ dimensional points in the set $P$ (with $|P| = N$) such that the time complexity of returning a result is $O(nN^\rho/P_1 \log_{1/P_2}N)$ and the space complexity is $O(nN^{1+\rho}/P_1)$ with $\rho = \frac{\log 1/P_1}{\log 1/P_2}$. Further, the failure probability is upper bounded by $1/3 + 1/e$.\label{thm:near-neighbor}
\end{theorem}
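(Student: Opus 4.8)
The plan is to reconstruct the classical Indyk--Motwani bucketing scheme on top of the given sensitive family, and to analyse it by separating two failure modes --- ``the true near neighbor is never hashed into the query's bucket'' and ``so many far points collide with the query that the search aborts before finding a good answer'' --- each controlled by a different mechanism.

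First I would amplify the family: for a parameter $k$ to be fixed later, let $\mathcal{G}$ be the family of functions $g(\cdot)=(h_1(\cdot),\dots,h_k(\cdot))$ with the $h_i$ drawn independently and uniformly from $\mathcal{H}$. By Definition~\ref{def:hashfunctions} and independence, a pair of points at distance $\le r$ collides under a random $g$ with probability at least $P_1^{k}$, while a pair at distance $\ge cr$ collides with probability at most $P_2^{k}$. I would then set $k=\lceil \log_{1/P_2} N\rceil$, so that $P_2^{k}\le 1/N$ while $P_1^{k}\ge P_1\,N^{-\rho}$ (using $P_1^{\log_{1/P_2}N}=N^{-\rho}$ for $\rho=\log(1/P_1)/\log(1/P_2)$, the extra $P_1$ absorbing the ceiling). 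Next I would build $L=\lceil N^{\rho}/P_1\rceil$ independent hash tables $T_1,\dots,T_L$, where $T_\ell$ stores every $p\in P$ under the key $g_\ell(p)$ for an independently drawn $g_\ell\in\mathcal{G}$.

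I would answer a query for a point $q$ by scanning the points held in buckets $g_1(q),\dots,g_L(q)$ in order, computing the actual Euclidean distance to $q$, returning the first point found at distance $\le cr$, and aborting (declaring failure) once $3L$ points have been inspected. Correctness then follows from a union bound over two events. If a point $p^{*}$ with $d(q,p^{*})\le r$ exists, it is missed by \emph{every} table with probability at most $(1-P_1^{k})^{L}\le (1-N^{-\rho}/P_1)^{N^{\rho}/P_1}\le 1/e$. Independently, the expected number of points at distance $>cr$ colliding with $q$ summed over all $L$ tables is at most $L\cdot N\cdot P_2^{k}\le L$, so by Markov's inequality the chance of more than $3L$ such ``far'' points --- the only way the scan can abort before reaching a legitimate answer --- is at most $1/3$; hence the total failure probability is at most $1/3+1/e$. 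For the resource bounds, a query performs $O(L)$ bucket evaluations, each costing $O(nk)$ to apply the $k$ hash functions to an $n$-dimensional point, plus $O(L)$ distance computations of cost $O(n)$, giving $O(nkL)=O\!\left(nN^{\rho}P_1^{-1}\log_{1/P_2}N\right)$ time; the tables occupy $O(NL)$ entries and the point set $O(nN)$, i.e.\ $O(nN^{1+\rho}/P_1)$ space.

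The main obstacle --- and the real content of the theorem --- is the joint calibration of $k$ and $L$. Making $k$ large enough drives the expected number of spurious far-point collisions down to $O(1)$ per table, which is exactly what permits capping the scan at $3L$ inspections without inflating the running time; but the same large $k$ shrinks the true-collision probability to roughly $N^{-\rho}$, so $L$ must be scaled up to $N^{\rho}/P_1$ to recover a constant retrieval probability. One then has to check that the $3L$ abort threshold does not by itself cause a genuine near neighbor to be skipped --- it cannot, because the first point at distance $\le cr$ encountered during the scan (whether $p^{*}$ or another) is already a valid output, and only far points can precede it. Once these pieces are arranged, the remainder is routine arithmetic with $P_1^{k}=\Theta(P_1 N^{-\rho})$, $P_2^{k}\le 1/N$, and Markov's inequality.
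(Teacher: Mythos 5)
Your reconstruction is correct and is essentially the standard Indyk--Motwani argument that the paper itself only sketches (it imports this theorem from \cite{har2012approximate} and describes the same construction with $L_1=\log N$, $L_2=N^{\rho}$, $L_3=3L_2$ playing the roles of your $k$, $L$, and abort threshold). The calibration of $k$ and $L$, the $1/e$ bound on missing the true near neighbor, and the Markov bound of $1/3$ on excess far-point collisions all match the source proof, so there is nothing to add beyond the trivial observation that the abort threshold should be read as ``more than $3L$'' inspections.
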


\begin{remark} Note that the failure probability can be changed to meet to any desired upper bound by \emph{amplification}: here we query a constant number (say $\kappa$) of different copies of the data structure above and aggregate the results to output a single near-neighbor candidate. Such an output will fail to be a $(c,r)$-NN with probability upper bounded by $(1/3+1/e)^{\kappa}$. 
\end{remark}

While the hash family with the properties stated in Definition~\ref{def:hashfunctions} already gives us a potential solution for the $(c,r)$-NN problem, the following data structure allows for a finer control of the time complexity and the approximation guarantees (corresponding to Theorem~\ref{thm:near-neighbor}). In particular, as described in~\cite{andoni2008near}, we employ multiple hash functions to increase the confidence in reporting near neighbors by amplifying the gap between $P_1$ and $P_2$. The number of such hash functions is determined by suitably chosen multiplicity parameters $L_1$ and $L_2$. In particular, we can choose $L_2$ functions of dimension $L_1$, denoted as $g_j(q) = ( h_{1,j}(q) , h_{2,j}(q), \cdots h_{L_1, j}(q) )$, where $h_{t,j}$ with $1 \leq t \leq L_1 ,1 \leq j \leq L_2$ are chosen independently and uniformly at random from the family of hash functions. The data structure for searching points with high similarity is constructed by taking each point $x$ in the search space and storing it in the location (bucket) indexed by  $g_j(x) , 1 \leq j \leq L_2$.  When a new query point $q$ is received,  $g_j(q) , 1 \leq j \leq L_2$ are  calculated and all the points from the search space in the buckets  $g_j(q) , 1 \leq j \leq L_2$ are retrieved.  We then compute the similarity of these points with  the query vector in a sequential manner and return any point that has a similarity greater than the specified threshold $r$. We also interrupt the search after finding the first $L_3$ points including duplicates for a suitably chosen value of $L_3$ (this is necessary for the guarantees in Theorem~\ref{thm:near-neighbor} to hold).

Given $P_1$, the probability that any point  $p$ such that  $d(q,p) \leq cr$ is retrieved is at least $ 1 - (1 - P_1^{L_1})^{L_2} $. Thus for any desired error probability $\delta > 0$, we can choose  $L_1$ and $L_2$ such that $p$ is returned by the data structure with probability at least $1 - \delta$. The time taken to perform a query, i.e., retrieve a point with the highest similarity can be as low as $O(nN^\rho)$, where $\rho = \frac{\ln (1/P_1)}{\ln (1/P_2)}$ is a number less than $1$, $N$ is the number of points in the search space and $n$ is the dimensionality of each point (see Theorem~\ref{thm:near-neighbor} above).
The key insight that allows for this sublinear dependence on the number of points is due to the appropriate choices of $L_1$, $L_2$ and $L_3$, which determine collisions of hashed (i.e., projected) transformations of the original points. In particular, choosing $L_1 = \log N$, $L_2 = N^\rho$ and $L_3 = 3L_2$ gives the desired computational performance (in addition to influencing the error probabilities). This sublinearity of retrieval time is what drives the computational efficiency of hashing based retrieval methods including our algorithms (in Section \ref{sec:optimize}).  
Finally, note that the memory requirements of the data structure turns out to be $O(nN^{1+\rho})$ (as stated in Theorem~\ref{thm:near-neighbor}), which is not too expensive in many large scale settings (we need $O(nN)$ space just to store the points).

It turns out that out the problem we care about in this work is related to the \emph{nearest} neighbor problem. And the nearest neighbor problem has a close connection to the \emph{near} neighbor problem described above. In particular, the following theorem states that an approximate nearest neighbor data structure can be constructed using an approximate near neighbor subroutine without taking a large hit in the time complexity, the space complexity or the retrieval probability.

\begin{theorem} [\cite{har2012approximate} Theorem 2.9] Let P be a given set of N points in a metric space, and let $c, f \in (0,1)$ and $\gamma \in (\frac{1}{N},1)$ be parameters. Assume we have a data-structure for the $(c,r)$-NN (approximate \emph{near} neighbor) problem that uses space $S$ and has query time $Q$ and failure probability $f$. Then there exists a data structure for answering $c(1+O(\gamma))$-NN (approximate \emph{nearest} neighbor) problem queries in time $O(Q\log N)$ with failure probability $O(f\log N)$. The resulting data structure uses $O(S/\gamma \log^2 N)$ space.\label{thm:nearest-neighbor}
\end{theorem}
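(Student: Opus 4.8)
The plan is to reduce the nearest-neighbor problem to a \emph{search over scales}, using the given near-neighbor data structure as a black box at a geometric sequence of radii. Concretely, I would fix radii $r_0 < r_1 < \cdots < r_m$ in geometric progression with ratio $1+\gamma$, i.e.\ $r_i = r_0 (1+\gamma)^i$, chosen to span the range of achievable nearest-neighbor distances, and build one independent $(c,r_i)$-NN structure for each $i$. A nearest-neighbor query for $q$ then amounts to identifying the smallest radius at which ``there is a near neighbor,'' since at that scale $r_i$ is a faithful proxy for the true distance $d^* = \min_{p \in P} d(q,p)$.

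For correctness, I would observe that the near-neighbor answers are essentially monotone in $i$: if $d^* \le r_i$ the structure at $r_i$ returns (with probability $1-f$) a point within $c\, r_i$, while if $d^* > c\, r_i$ it returns nothing, so the ``returns a point'' indicator is monotone outside the factor-$c$ ``don't-care'' band where the guarantee is silent. A binary search maintaining the invariant ``lower endpoint returns nothing, upper endpoint returns a point'' terminates at adjacent indices $i^*-1, i^*$ carrying this property; the returned point $p$ then satisfies $d(q,p) \le c\, r_{i^*}$, while the failure at $i^*-1$ certifies $d^* > r_{i^*-1} = r_{i^*}/(1+\gamma)$. Combining, $d(q,p) \le c(1+\gamma)\, d^* = c(1+O(\gamma))\, d^*$, exactly the claimed approximation. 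The search issues $O(\log m)$ near-neighbor queries, so a union bound gives total failure $O(f \log m)$ and query time $O(Q \log m)$; since (after the reduction below) $m = \mathrm{poly}(N)$, these become $O(f \log N)$ and $O(Q \log N)$, matching the statement.

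The main obstacle is that, taken literally, the number of radii is $m = \Theta\big(\tfrac{1}{\gamma}\log(\mathrm{spread}(P))\big)$, and the spread (ratio of largest to smallest interpoint distance) can be arbitrarily large, so a naive instantiation is not even of finite size. The crux of the proof is therefore to \emph{remove the spread dependence}: I would reduce the general instance to a collection of bounded-spread instances via a hierarchical decomposition of $P$ into $O(\log N)$ scale-levels (a ring-/separator-style clustering), so that within each level the relevant interpoint distances lie in a window of polynomial ratio and only $O(\tfrac{\log N}{\gamma})$ radii are needed there. This yields $O(\tfrac{\log^2 N}{\gamma})$ near-neighbor structures in total, giving the claimed space $O\big(S\,\tfrac{\log^2 N}{\gamma}\big)$, while any single query descends through only $O(\log N)$ of them, preserving the $O(Q \log N)$ query time and $O(f \log N)$ failure bound. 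I expect this spread-removal step, together with the careful handling of queries falling extremely close to a data point (where $d^*$ lies below the smallest radius), to be the technically delicate part; the radius grid and the binary-search analysis above are routine by comparison.
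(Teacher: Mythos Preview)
Your proposal is correct and in fact considerably more complete than what the paper offers. The paper does not prove this theorem; it is quoted from \cite{har2012approximate}, and the paper only supplies an informal intuition paragraph before deferring to that reference for the actual construction.

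The paper's sketch differs from yours in two ways. First, it describes a \emph{linear} scan over thresholds---start at the smallest radius and query successively larger ones until some structure returns a point---rather than a binary search. This is simpler to state but does not by itself yield the $O(Q\log N)$ query bound; indeed the paper's sketch does not attempt to account for query time at all. Second, and more importantly, the paper's intuition entirely sidesteps the spread issue you correctly flag as the crux: it never discusses how many radii are needed or how to bound that number independently of the point configuration, instead making an ad hoc assumption (``the different radii are not very far apart'') and explicitly noting that the real construction in \cite{har2012approximate} does not need this assumption. Your hierarchical decomposition into $O(\log N)$ bounded-spread levels, with $O(\tfrac{1}{\gamma}\log N)$ radii per level, is exactly the mechanism that produces the stated $O(S\gamma^{-1}\log^2 N)$ space bound, and is the substantive content the paper delegates to the cited source.

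In short: your outline is the right proof, and it goes well beyond the paper's own treatment, which is only a pointer plus a heuristic.
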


To give intuition about how near neighbor data structures can be used to compute the nearest neighbor, we describe a simple strategy, which is as follows. We create multiple near neighbor data structures as described in Theorem~\ref{thm:near-neighbor} using different threshold values ($r$) but with the same success probability $1-f$ (by amplification for instance). When a query vector is received, we calculate the near-neighbors using the hash structure with the lowest threshold. We continue checking with increasing value of thresholds till we find at least one near neighbor. Let $\widetilde{r}$ be the first threshold for which there is at least one near neighbor. This implies that the probability that we don't find the true nearest neighbor is at most $f$ because the near neighbor data structure with the threshold $\widetilde{r}$ has success probability $1-f$.  Assuming that the different radii in the data structures are such that the number of points returned for the threshold  $\widetilde{r}$ is sublinear in $N$ i.e. $O(N^\eta)$ for some $\eta < 1$, gives us the desired data structure for the nearest neighbor problem. While the last assumption is reasonable as long as the different radii are not very far apart, the structure corresponding to Theorem~\ref{thm:nearest-neighbor} does not need it. Further details on this data structure can be found in ~\cite{har2012approximate}.

Finally, we remark about the properties of the hash families characterized by parameters $\rho$ and $c$. These two parameters are inversely related to each other. For instance, in \cite{andoni2008near}, the authors propose a family $\mathcal{H}$ for which $\rho(c) = \frac{1}{c^2} + O(\log\log N / \log^{1/3}N)$. For large enough $N$, and for say $c = 2$ (2 approximate near neighbor), $\rho(c) \approx .25$, and thus the query completion time of near neighbor (as well as nearest neighbor) queries is $\propto N^{.25}$ which is a significant computational saving.

\subsection{Solving \nmips{} and \cnn{} using LSH}

We illustrate how the MIPS problem can be solved approximately using a specific LSH family, as described in~\cite{neyshabur2015symmetric}. For $x \in \mathcal{R}^{\noofprod}, || x ||_2 \leq 1$, we first define a preprocessing transformation $T: \mathcal{R}^{\noofprod} \rightarrow \mathcal{R}^{\noofprod+1}$ as $ T(x) =  [x; \sqrt{1- ||x||_2^2}]$. We sample a spherical random vector $a \sim \mathcal{N}(0,I)$ and define the hash function as $h_a(x) = sign(a\cdot x)$. All points in the search space are preprocessed as per the transformation $T(\cdot)$ and then hashed using hash functions, such as the above, to get a set of indexed points.
During run-time when we obtain the query vector, it is also processed in the same way i.e. first transformed through $T(\cdot)$ and then 
through the same hash functions that were used before. Checking for collisions in a way that is similar to the near neighbor problem in Section~\ref{subsec:LSH} retrieves the desired similar points.

Without loss of generality, assume that the query vector $y$ has $||y||_2 = 1$. The following guarantee can be shown for the probability of collision of two hashes:
$$\mathcal{P}[h_a(T(x)) = h_a(T(y))] = 1 - \frac{cos^{-1}(x\cdot y)}{\pi},$$ 
which is a  decreasing function of the inner product $x\cdot y$. For any chosen threshold value $D$ and  $c<1$, we consequently get the following:
\begin{itemize}
\item  If  $x\cdot y \geq D$, then
$\mathcal{P}[h_a(T(x)) = h_a(T(y))]  \geq 1 - \frac{cos^{-1}(D)}{\pi}$.
\item  If  $x\cdot y \leq cD$, then
$\mathcal{P}[h_a(T(x)) = h_a(T(y))]  \leq 1 - \frac{cos^{-1}(cD)}{\pi}$.
\end{itemize}

We are by no means restricted to using the above hash family. Because of the transformation $T(\cdot)$, we have obtained a nearest neighbor problem that is equivalent to the original inner product search problem. Thus, any hash family that is appropriate for the Euclidean nearest neighbor problem can be used (see~\ref{subsec:LSH}). 

\section{Additional Information on Proposed Algorithms} \label{app:algo}

In this section, we provide: (a) additional examples of business rules that lead to capacity constraints and how to handle them (\ref{app:capacity-extensions}), (b) justification for the updates in \aheu \ (\ref{subsec:aheu-update}), (c) modification of \aheu for experiments (\ref{subsec:modified-aheu}), and (d) proofs of correctness as well as time-complexities of algorithms (\ref{app:algo_correct}).

\subsection{Other Examples of Capacity Constraints}\label{app:capacity-extensions}

\noindent(1) \textit{Lower Bound on Assortment Size} - This constraint requires that all feasible assortments have at least $c \leq C$ items in addition to having at most $C$ items. In every iteration of Algorithm \ref{alg:ann_outline}, we find the top-$C$ items (by the  product $v_i(p_i - K)$)  that give a positive value of the inner product. This can be modified to finding the top-$C$ items and including at least the top-$c$ of them irrespective of the sign of the product. \\

\noindent(2) \textit{Capacity Constraints on Subsets of Items} - To ensure diversity in the assortment, we may have constraints on how many items can be chosen from certain subsets of items. Suppose the items are partitioned into subsets $B_1, B_2, \cdots, B_w$ and we have capacity constraints $C_1, C_2, \cdots, C_w$ on each subset respectively, then the comparison to be solved in every iteration can be written as: 
	\begin{eqnarray*}
		K \leq   \max_{S: |S| \leq C_1, S \subseteq B_1} \frac{1}{v_0}   \sum_{i \in S}  v_i (p_i - K)  + \cdots +   \max_{S: |S| \leq C_w, S \subseteq B_w} \frac{1}{v_0}  \sum_{i \in S}  v_i (p_i - K). 
	\end{eqnarray*} 
One can solve these $w$ independent problems the same way as the previous setting.\\

\noindent(3) \textit{Assortments Near a Preferred Reference Assortment} - This constraint requires that our search should only consider those assortments that are near a preferred/status-quo assortment (for instance, the currently deployed one). In the extreme case where these preferred items must all be in the final assortment, the problem reduces to an optimization for the remaining wiggle room in capacity. Otherwise, we can impose a constraint that at least some of the items from the preferred assortment are in the final solution (see (1) above), and simultaneously impose a reduced capacity constraint on the remaining items (see (2) above).

\subsection{Justifying the Updates in \aheu}\label{subsec:aheu-update}

Here we provide a formal proof for one of the interval updates in \aheu. The other two updates can be proved along similar lines.

\revision{
\begin{proposition}
If $\hat{K} \geq \approxsol \cdot \bv$, then $K \geq\exactsol \cdot \bv$.
\end{proposition}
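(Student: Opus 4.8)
The proof chains the \nmips{} approximation guarantee with the hypothesis and then simplifies using the explicit form of $\hat{K}$. First I would recall the set-up of the comparison step (Algorithm~\ref{alg:ann_comp}): the $\apargmax$ operation in \aheu{} is exactly an \nmips{} instance whose point set is $\mathbf{\widehat{Z}}=\{\hat{\bz}^S:S\in\cS\}$ and whose query is (a unit-sphere normalization of) $\mathbf{\hat{v}_K}$, so $\exactsol$ is by construction the exact maximizer of the relevant inner product, i.e.\ $\exactsol\cdot\bv=\max_{S\in\cS}(\text{comparison objective})$, while $\approxsol$ is the returned \nmips{} point. Since we are in the regime $\failprob=0$, the \nmips{} guarantee holds deterministically and gives
$$1+(1+\nu)^2\bigl(\exactsol\cdot\bv-1\bigr)\;\le\;\approxsol\cdot\bv\;\le\;\exactsol\cdot\bv.$$

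\textbf{Main argument.} Assume the hypothesis $\hat{K}\ge\approxsol\cdot\bv$. Combining it with the left-hand inequality above yields $1+(1+\nu)^2(\exactsol\cdot\bv-1)\le\hat{K}$. Now substitute the definition $\hat{K}=1+(1+\nu)^2(K-1)$, cancel the leading $1$ on both sides, and divide through by $(1+\nu)^2>0$; this leaves $\exactsol\cdot\bv-1\le K-1$, i.e.\ $K\ge\exactsol\cdot\bv$, which is the claim. The inequality manipulation itself is a single line; nothing delicate happens there.

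\textbf{Where the care goes, and the companion cases.} The step I expect to need the most attention is not algebra but bookkeeping: one must make precise that the $\exactsol$ in the statement is the exact MIPS maximizer over $\mathbf{\widehat{Z}}$ for the \emph{same} query used to produce $\approxsol$, and that passing to the unit-sphere representation (via the transformation of \citet{neyshabur2015symmetric} underlying Lemma~\ref{lemma:nn2mips}, needed so that the $1+(1+\nu)^2(\cdot-1)$ form of the \nmips{} guarantee applies literally) leaves both the identity of the maximizer and the value of each inner product unchanged up to the common positive scaling. Also worth noting is that the argument only uses $K\le p_1\le 1$ implicitly through $\hat{K}\le K$, which is not even required for this particular update. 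Once the identification is in place, the remaining two interval updates follow by the same chaining: the case $K\le\approxsol\cdot\bv$ uses the right-hand inequality $\approxsol\cdot\bv\le\exactsol\cdot\bv$, and the sandwiched case $\hat{K}\le\approxsol\cdot\bv\le K$ uses the left-hand inequality to conclude $\hat{K}\le\exactsol\cdot\bv$.
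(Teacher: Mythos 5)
Your proof is correct and rests on exactly the same ingredient as the paper's: the left-hand side of the \nmips{} guarantee, $1+(1+\nu)^2(\exactsol\cdot\bv-1)\le\approxsol\cdot\bv$, combined with the definition $\hat{K}=1+(1+\nu)^2(K-1)$ and the monotonicity of $x\mapsto 1+(1+\nu)^2(x-1)$. The only difference is that you argue the implication directly while the paper proves the contrapositive, which is a trivial logical reformulation of the same inequality chain, so the two proofs are essentially identical.
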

\begin{proof} 
We will prove the contra-positive of this statement. That is, we will show that if $\exactsol$ is such that $\exactsol\cdot \bv$ is indeed greater than $K$, then $\hat{K} \leq \approxsol \cdot \bv$.  If we could solve the exact MIPS problem with the query vector $\bv$ (that depends on $K$), we would get the solution $\exactsol$ and immediately conclude that $\exactsol\cdot \bv \geq K$. With the $\apargmax$ operation (with failure probability $f=0$), we are returned an approximate solution $\approxsol$ with the guarantee that $1 + (1+\nu)^2(\exactsol\cdot \bv- 1) \leq \approxsol \cdot \bv\leq \exactsol\cdot \bv$. But since, $K \leq \exactsol\cdot \bv$, we also have $1 + (1+\nu)^2(K - 1) \leq 1 + (1+\nu)^2(\exactsol\cdot \bv - 1)$. This readily implies $\hat{K} \leq \approxsol \cdot \bv$. 
\end{proof}
}

\subsection{Modified \aheu{} for Experiments}\label{subsec:modified-aheu}

In Algorithm~\ref{alg:aheu_simpler}, we simply perform the approximate comparison (seen in \aheu) within the binary search loop specified by \ann. In other words, even though the comparison is approximate, we ignore this while updating the interval. The reason we do this, even though there is a change of not narrowing down into an appropriate interval, is for the experiments. The underlying data structures don't expose a fixed $\nu$, which is needed as a parameter in \aheu, and instead optimize it in a data-driven way (see Section~\ref{sec:experiments}). And because we don't have a direct handle on this parameter, we modify \aheu into the simpler heuristic in Algorithm~\ref{alg:aheu_simpler} (which does not depend on $\nu$).

\begin{algorithm}[H]
  \caption{\small{Simpler} \aheu{} for Experiments}
  \label{alg:aheu_simpler}
  \begin{algorithmic} 
    \REQUIRE{ Prices $\{p_i\}_{i=1}^{n}$, tolerance parameter $\epsilon$} \\
    \STATE{$L_1 = 0, \ U_1 = p_1 , \ t=1,\bp = (p_1, \cdots, p_n)$} 
    \STATE{$\bu^S = (u_1, u_2, \cdots u_n) \ \text{ where } \  u_i = \mathbf{1} \{ i \in S\} , \text{ for any } S \in \cS $}
    \STATE{$\hat{S} = \{1 \}, \ \mathbf{\widehat{Z}} = \{ \mathbf{\hat{z}}^S | \hat{\bz}^S = \left( \bp \circ \bu^S, \bu^S \right), S \in \cS \}$}
    \WHILE{ $U_t - L_t > \epsilon$} 
    \STATE{ $K = \frac{L_t + U_t}{2}$} \\
    \STATE{$\mathbf{\hat{v}_K} = (v_1, \cdots, v_n, -v_1K,-v_2K, \cdots -v_nK)$} \\
    \STATE{$ \hat{\bz}^{\tilde{S}_j} = \apargmax_{\mathbf{\hat{z}}^S \in \mathbf{\hat{Z}}} \ \mathbf{\hat{v}_K} \cdot \mathbf{\hat{z}}^S  $}
    \IF{$K \leq  \frac{ \bv \cdot \hat{\bz}^{\tilde{S}}}{v_0}    $}  
    \STATE{$L_{t+1} =  K, U_{t+1} = U_t$, $\hat{S}  = \tilde{S}_j $}
    \ELSE 
    \STATE{$L_{t+1} = L_t , U_{t+1} = K$} \\
    \ENDIF 
    \ENDWHILE
    \RETURN{$ \hat{S} $}
  \end{algorithmic}
\end{algorithm}

\subsection{Proofs for Correctness and Time-Complexities} \label{app:algo_correct}
In this section, we provides proofs for the following list of claims:
\begin{itemize}
  \item Lemma \ref{lem:ann_correct}
  \item Lemma~\ref{lem:ann_capacity_time}
  \item Lemma~\ref{lemma:ann-general-time-complexity}
  \item Lemma \ref{lem:aheu_correct}
  \item Lemma~\ref{lemma:aheu-iteration-complexity}
  \item Theorem \ref{thm:BZError}
  \item Lemma~\ref{lemma:alsh-time-complexity}
\end{itemize}

\begin{proof}[Proof of Lemma \ref{lem:ann_correct}]
	
\revision{	In every iteration of the binary search procedure in \ann, we cut down the search space for the optimal revenue by half. We start with a search space of range $[0,p_1]$. Thus, the number of iterations to get to the desired tolerance is $ T = \left\lceil \log \frac{p_1}{\epsilon} \right \rceil$.}
	
 Let $\widetilde{T}$ be the last iteration $j \leq T $ such that $K_j \leq \max_{S \in \mathcal{S}} \revsv$. Hence, $ \revshatv \geq K_{\widetilde{T}}$.
	\noindent By the update rule, 
	$ L_{\widetilde{T} + 1} $ = $K_{\widetilde{T}}$ and  $ L_{j} = L_{\widetilde{T} +1}  =   K_{\widetilde{T}} \ \forall \  \widetilde{T}+1 < j  \leq T$.
	Hence, $ L_T \leq  \revshatv \leq U_T$.
	
	Using the termination condition, we know that $U_T - L_T \leq \epsilon$ and $ L_T \leq \revstarv \leq U_T $. Thus, $\revshatv \geq \revstarv -  \epsilon$.
	
\end{proof}

\revision{
\begin{proof}[Proof of Lemma~\ref{lem:ann_capacity_time}] 
  When using \ann \ under capacity constraint, as no special data structures are needed, the only space requirement is that of storing the price and utility parameters leading to a space complexity of $O(n)$. \\
  \indent The  \textsc{compare-step} in \ann \ amounts to finding top $C$ elements among $n$ elements. This has a complexity of $O(n\log C)$.   From Lemma \ref{lem:ann_correct}, the number of binary search iterations is  $ \left\lceil \log \frac{p_1}{\epsilon} \right \rceil$ leading to a time complexity of $O(n\log C \log \frac{p_1}{\epsilon})$.
\end{proof}
}

\revision{
 \begin{proof}[Proof of Lemma~\ref{lemma:ann-general-time-complexity}]
  In \ann,  the MIPS query in every iteration can be solved exactly in time $O(\noofprod \noofset)$ in a trivial manner when there is no compact representation for the set of feasible assortments. From Lemma \ref{lem:ann_correct}, the number of binary search iterations is  $ \left\lceil \log \frac{p_1}{\epsilon} \right \rceil$ leading to a run-time of $O(\noofprod \noofset \log \frac{p_1}{\epsilon}  )$. \\
  \indent As \ann \ doesn't require any additional storage except the space needed for storing all the feasible sets. Thus, the space complexity is $O(nN)$. 
 \end{proof}
 }

\begin{proof}[Proof of Lemma \ref{lem:aheu_correct}]
	
	Let the total number of iterations in the binary search be $T$ and let $\widetilde{T}$ be the last iteration $j \leq T $ such that $\hat{K}_j \leq { \bv \cdot \hat{\bz}^{\tilde{S}_j}}/ {v_0}  $. Hence, $ \revshatv \geq { \bv \cdot \hat{\bz}^{\tilde{S}_{\widetilde{T}}}}/ {v_0}  $.
	
	\noindent By the update rule, $ L_{j} = L_{\widetilde{T} +1}  \ \forall \  \widetilde{T}+1 < j  \leq T.$ 
	It is also easy to see that $\revshatv \geq L_{\widetilde{T} +1}.$ 
	Hence, $ L_T \leq  \revshatv \leq U_T.$
	\noindent Using the termination condition, we know that $U_T - L_T \leq \epsilon$ and $ L_T \leq \revstarv \leq U_T $. Thus, $\revshatv \geq \revstarv -  \epsilon$.
\end{proof}

\begin{proof}[Proof of Lemma~\ref{lemma:aheu-iteration-complexity}]
Let $I_j$ denote the size of the search interval in the $j$-th iteration of \aheu , i.e., $I_j = U_j - L_j$. As described there are three possible updates of the search interval. In the first two updates, $I_{j+1} = \frac{I_j}{2}$. In the third update rule, 
\begin{align*}
I_{j+1} &= U_j - \hat{K} = \frac{I_j}{2} + (\nu^2 + 2\nu)(1-K_j) \leq \frac{I_j}{2} + (\nu^2 + 2\nu).
\end{align*}

Define $\hat{\nu} = \nu^2 + 2\nu$. Then, in all the three cases $I_{j+1} \leq \frac{I_j}{2} + \hat{\nu}$.
Similarly, 
\begin{align*}
I_{j} \   \leq  \ \frac{I_{j-1}}{2} + \hat{\nu}, 
I_{j-1} \  \leq \ \frac{I_{j-2}}{2} + \hat{\nu}, 
\cdots
I_1 \  \leq \ \frac{I_{0}}{2} + \hat{\nu}.
\end{align*}

where $I_0 = p_1$. Taking a telescopic sum, we get
\begin{align*}
I_j \ & \leq \ \frac{I_{0}}{2^t} + \hat{\nu} \left( 1 + \frac{1}{2} + \cdots \frac{1}{2^{j-1}} \right), \\
& = \ \frac{I_{0}}{2^t} + 2\hat{\nu} \left( 1- \frac{1}{2^t} \right) \leq  \ \frac{I_{0}}{2^t} + 2\hat{\nu}.
\end{align*}

Thus, the number of iterations required to get the size of the search interval to within $\epsilon$ is $\left\lceil \log_2 \frac{p_1}{\epsilon -2\hat{\nu}}\right\rceil$.

\end{proof}

\begin{proof}[Proof of Theorem \ref{thm:BZError} ]

The proof from \cite{burnashev1974interval} (that quantifies the error in the vanilla BZ algorithm) has been modified for our setting in the following manner: (a) we remove the restriction that the noise distribution in the $\apargmax$ operation should be Bernoulli, and (b) we generalize the setting such that the error probability $p_j$ in the $\apargmax$ operation at every iteration $j$ can be different with $P_e = \max_{j \in {1, \cdots T}} p_j$.

The interval $[0,p_1]$ is divided into subintervals of width $\epsilon, \ a_i(j)$ denotes the posterior probability that the
optimal revenue $\theta^*$ is located in the $i$-th subinterval after the $j$-th iteration, and $\theta_j$ denotes the median of the posterior after  the $j$-th iteration. Let $Y_j = h(K_j)$ denote the outcome of the comparison. 
Let $\theta^*$ be fixed but arbitrary, and define $u(\theta^*)$ to be the index of the bin $I_i$ containing $\theta^*$, that is $\theta^* \in I_{u(\theta^*)}$. In general, let $u(j)$ be defined as the index of the bin containing the median of the posterior distribution in the $j$-th iteration.
We define two more functions of $\theta^*$, $M_{\theta^*}(j)$ and $N_{\theta^*}(j)$ as below - 

\begin{gather*}
M_{\theta^*}(j) = \frac{1- a_{u(\theta^*)}(j)}{a_{u(\theta^*)}(j)}, \text{ and} \\
N_{\theta^*}(j + 1) = \frac{M_{\theta^*}(j + 1)}{M_{\theta^*}(j)} =\frac{ a_{u(\theta^*)}(j)(1 -  a_{u(\theta^*)}(j + 1))} {a_{u(\theta^*)}(j + 1)(1 -  a_{u(\theta^*)}(j))}.
\end{gather*}

After $T$ observations our estimate of $\theta^*$ is the median of the posterior density $\pi_T(x)$, which means that $\theta_T \in I_{u(T)}$. Taking this into account we conclude that

\begin{align*}
P(|\theta_T - \theta^*| > \epsilon) \ & \leq \ P(a_{u(\theta^*)}(j) < 1/2), \\
& = \ P(M_{\theta^*}(T) > 1), \\
& \leq \ E[M_{\theta^*}(T)] \hspace{2mm} (\text{because of Markov's inequality}).
\end{align*}

Using the definition of $N_{\theta^*}(j)$, and manipulating conditional expectations we get:
\begin{align*}
E[M_{\theta^*}(T)] & =  E[M_{\theta^*}(T - 1)N_{\theta^*}(T)], \\
& = E [E[M_{\theta^*}(T - 1)N_{\theta^*}(T)|\ba(T - 1)]], \\
& = E [M_{\theta^*}(T - 1)E [N_{\theta^*}(T)|\ba(T - 1)]], \\
& \vdots \\
& = M_{\theta^*}(0)E [E[N_{\theta^*}(1)|\ba(0)] \cdots E[N_{\theta^*}(T)|\ba(T - 1)]], \\
&\leq M_{\theta^*}(0) \left\lbrace \max_{j \in \lbrace0,1,\cdots T-1 \rbrace}\max_{\ba(j)}E[N_{\theta^*}(j+1)|\ba(j)] \right\rbrace ^T.
\end{align*}

The error in the $\apargmax$ operation is not Bernoulli but has the following behavior. If $K_{j} > \theta^*$, then $Y_j= 0$ (there is no error). If $K_{j} < \theta^*$, then an error can occur and 
\begin{center}
$h(K_{j}) = 
\begin{cases}
1 & \text{with probability } 1-p_j, \textrm{ and} \\
0 & \text{with probability }p_j. \\
\end{cases}
$ \\
\end{center}

To bound $P(|\hat{\theta}_T - \theta^*| > \epsilon)$, we are going to consider three cases: (i) $u(j) = u(\theta^*)$; (ii) $u(j) > u(\theta^*)$; and (iii) $u(j) < u(\theta^*)$. For each of these cases, we first derive an expression for $N_{\theta^*}(j + 1)$ (with some algebraic manipulation and simplification) as follows:

$N_{\theta^*}(j+1) = 
\begin{cases}
\frac{1+(\beta-\alpha)x}{2\beta} \text{ with probability } B=1-A. \textrm{ and} \\ 
\frac{1-(\beta-\alpha)x}{2\alpha} \text{ with probability } A. \\ 
\end{cases}
$ \\
Thus:
\begin{enumerate}[(i)]
\item When $u(j) = u(\theta^*)$ and
  \begin{enumerate}
  \item $K_{j+1} = p_1^{-1}\epsilon (u(j)-1) \text{, then } 
   x = \frac{\tau_1(j) - a_{u(\theta^*)}(j)}{1- a_{u(\theta^*)}(j)}, \textrm{ and } A = p_{j+1}$.
   \item $K_{j+1} = p_1^{-1}\epsilon u(j) \text{, then } 
   x = \frac{\tau_2(j) - a_{u(\theta^*)}(j)}{1- a_{u(\theta^*)}(j)}, \textrm{ and }  A = 0$.
  \end{enumerate}
  \item When $u(j) > u(\theta^*)$ and
  \begin{enumerate}
  \item $K_{j+1} = p_1^{-1}\epsilon (u(j)-1) \text{, then } 
     x =  - \frac{\tau_1(j) + a_{u(\theta^*)}(j)}{1- a_{u(\theta^*)}(j)}, \textrm{ and }  A = 0$.
     \item $K_{j+1} = p_1^{-1}\epsilon u(j) \text{, then } 
     x = \frac{\tau_2(j) - a_{u(\theta^*)}(j)}{1- a_{u(\theta^*)}(j)}, \textrm{ and }  A = 0$.
  \end{enumerate}
   \item When $u(j) < u(\theta^*)$ and
  \begin{enumerate}
  \item $K_{j+1} = p_1^{-1}\epsilon (u(j)-1) \text{, then } 
       x =   \frac{\tau_1(j) - a_{u(\theta^*)}(j)}{1- a_{u(\theta^*)}(j)}, \textrm{ and }  A = p_{j+1}$.
       \item $K_{j+1} = p_1^{-1}\epsilon u(j) \text{, then } 
       x = - \frac{\tau_2(j) + a_{u(\theta^*)}(j)}{1- a_{u(\theta^*)}(j)}, \textrm{ and }  A = p_{j+1}$.
    \end{enumerate}
\end{enumerate}

As $0\leq \tau_1(j) \leq 1$ and $0 < \tau_2(j) \leq 1$, we have $|x| \leq 1$ in all the above cases. Define 
\begin{align*}
g_A(x) & = \frac{B(1+(\beta- \alpha)x)}{2\beta} + \frac{A(1-(\beta-\alpha)x)}{2\alpha} \\
& = \frac{B}{2\beta} + \frac{A}{2\alpha} + \left( \frac{B}{2\beta} - \frac{A}{2\alpha} \right) (\beta - \alpha)x.
\end{align*}
$g_A(x)$ is an increasing function of $x$ as long as $0<A<\alpha$. It is also an increasing function of $A$  when $x<1$ and $\alpha < 1/2$.

Now, let us evaluate $E[N_{\theta^*}(j+1)|\ba(j)]$. For the three cases we have 
\begin{enumerate}[(i)]
\item When $u(j) = u(\theta^*), $ \\
$E[N_{\theta^*}(j+1)|\ba(j)] = P_1(j) g_{p_{j+1}} \left( \frac{\tau_1(j) - a_{u(\theta^*)}(j)}{1-a_{u(\theta^*)}(j)} \right) + P_2(j) g_0 \left( \frac{\tau_2(j) - a_{u(\theta^*)}(j)}{1-a_{u(\theta^*)}(j)} \right). $
\item When $u(j) > u(\theta^*), $ \\
$E[N_{\theta^*}(j+1)|\ba(j)] = P_1(j) g_{0} \left( \frac{-\tau_1(j) - a_{u(\theta^*)}(j)}{1-a_{u(\theta^*)}(j)} \right) + P_2(j) g_0 \left( \frac{\tau_2(j) - a_{u(\theta^*)}(j)}{1-a_{u(\theta^*)}(j)} \right). $
\item When $u(j) < u(\theta^*), $ \\
$E[N_{\theta^*}(j+1)|\ba(j)] = P_1(j) g_{p_{j+1}} \left(  \frac{\tau_1(j) - a_{u(\theta^*)}(j)}{1-a_{u(\theta^*)}(j)} \right) + P_2(j) g_{p_{j+1}} \left( \frac{-\tau_2(j) - a_{u(\theta^*)}(j)}{1-a_{u(\theta^*)}(j)} \right)$.
\end{enumerate}

We will now bound $E[N_{\theta^*}(j+1)|\ba(j)]$ for all the three cases. We will use the fact that for all $0<a<1$, we have $\frac{\tau-a}{1-a} \leq \tau$ and $-\left( \frac{\tau+a}{1-a} \right) \leq -\tau $.

Starting with case (i), we have 
$\tau_2(j) - a_{u(\theta^*)}(j) = a_{u(\theta^*)}(j) - \tau_1(j)$. Thus, 

\begin{align*}
E[N_{\theta^*}(j+1)|\ba(j)] & = P_1(j) g_{p_{j+1}} \left(  \frac{\tau_1(j) - a_{u(\theta^*)}(j)}{1-a_{u(\theta^*)}(j)} \right) + P_2(j) g_{0} \left( \frac{a_{u(\theta^*)}(j) - \tau_1(j)}{1-a_{u(\theta^*)}(j)} \right), \\
& \leq P_1(j) g_{p_{j+1}} \left(  \frac{\tau_1(j) - a_{u(\theta^*)}(j)}{1-a_{u(\theta^*)}(j)} \right) + P_2(j) g_{p_{j+1}} \left( \frac{a_{u(\theta^*)}(j) - \tau_1(j)}{1-a_{u(\theta^*)}(j)} \right), \\
& = \frac{q_{j+1}}{2\beta} + \frac{p_{j+1}}{2\alpha} + \left( \frac{q_{j+1}}{2\beta} - \frac{p_{j+1}}{2\alpha} \right)(\beta - \alpha)\frac{\tau_1(j) - a_{u(\theta^*)}(j)}{1-a_{u(\theta^*)}(j)}\left( P_1(j) - P_2(j) \right),\\
& = \frac{q_{j+1}}{2\beta} + \frac{p_{j+1}}{2\alpha} + \left( \frac{q_{j+1}}{2\beta} - \frac{p_{j+1}}{2\alpha} \right)(\beta - \alpha)\frac{\tau_1(j) - a_{u(\theta^*)}(j)}{1-a_{u(\theta^*)}(j)}\frac{\tau_2(j) - \tau_1(j)}{\tau_2(j) + \tau_1(j)}, \\
& = \frac{q_{j+1}}{2\beta} + \frac{p_{j+1}}{2\alpha} + \left( \frac{q_{j+1}}{2\beta} - \frac{p_{j+1}}{2\alpha} \right)(\beta - \alpha)\frac{\tau_1(j) - a_{u(\theta^*)}(j)}{1-a_{u(\theta^*)}(j)}\frac{2a_{u(\theta^*)}(j) - 2\tau_1(j)}{\tau_2(j) + \tau_1(j)}, \\
& \leq \frac{q_{j+1}}{2\beta} + \frac{p_{j+1}}{2\alpha}.
\end{align*}

In case (ii),
\begin{align*}
E[N_{\theta^*}(j+1)|\ba(j)] & \leq   P_1(j) g_{0}(-\tau_1(j)) + P_2(j) g_0(\tau_2(j)),  \\
& = P_1(j) \left(\frac{1}{2\beta} - \frac{1}{2\beta}(\beta - \alpha)\tau_1(j)\right) + P_2(j)\left(\frac{1}{2\beta} + \frac{1}{2\beta}(\beta - \alpha)\tau_2(j)\right), \\
& = \frac{1}{2\beta} \ \left( \text{because $-P_1\tau_1(j) + P_2\tau_2(j) = 0 $} \right), \\
& \leq \frac{q_{j+1}}{2\beta} + \frac{p_{j+1}}{2\alpha} \ \left( \text{because } \alpha \leq \frac{1}{2} \right).
\end{align*}

In case (iii),
\begin{align*}
E[N_{\theta^*}(j+1)|\ba(j)] & \leq P_1(j) g_{p_{j+1}}(\tau_1(j)) + P_2(j) g_{p_{j+1}}(-\tau_2(j)), \\
& = \frac{q_{j+1}}{2\beta} + \frac{p_{j+1}}{2\alpha} + \left( \frac{q_{j+1}}{2\beta} + \frac{p_{j+1}}{2\alpha} \right) (\beta - \alpha)(P_1(j)\tau_1 - P_2(j)\tau_2), \\
& = \frac{q_{j+1}}{2\beta} + \frac{p_{j+1}}{2\alpha}.
\end{align*}

Thus, in all the cases $E[N_{\theta^*}(j+1)|\ba(j)] \leq \frac{q_{j+1}}{2\beta} + \frac{p_{j+1}}{2\alpha} \leq  \frac{Q_e}{2\beta} + \frac{P_e}{2\alpha}$.

Substituting this in the inequality for $E[M_{\theta^*}(T)]$, we get
$E[M_{\theta^*}(T)] =  M_{\theta^*}(0)\left\lbrace \frac{Q_e}{2\beta} + \frac{P_e}{2\alpha} \right\rbrace^T.$ Hence, 
\begin{align*}
P(|\theta_T - \theta^*| > \epsilon) & \leq M_{\theta^*}(0)\left\lbrace \frac{q}{2\beta} + \frac{p}{2\alpha} \right\rbrace^T ,\\
&  \leq \frac{1-p_{1}^{-1}\epsilon}{p_{1}^{-1}\epsilon} \left\lbrace \frac{q}{2\beta} + \frac{p}{2\alpha} \right\rbrace^T ,\\
& = \frac{p_1 - \epsilon}{\epsilon} \left\lbrace \frac{q}{2\beta} + \frac{p}{2\alpha} \right\rbrace^T. 
\end{align*}

By construction, $\hat{\theta}_{T} \geq \theta_T$. If $\hat{\theta}_{T} > \theta_T$, then $\theta^* > \hat{\theta}_{T} > \theta_T$. Thus, 
$$P(|\hat{\theta}_{T} - \theta^*| > \epsilon) \leq \frac{p_1 - \epsilon}{\epsilon} \left\lbrace \frac{q}{2\beta} + \frac{p}{2\alpha} \right\rbrace^T.$$
\end{proof}

\revision{
\begin{proof}[Proof of Lemma~\ref{lemma:alsh-time-complexity}]
  With the upper bound $P_{max}$ on $P_e$, \alsh \  can be run with $\alpha = \sqrt{P_{max}} $ (as $P_e \leq \sqrt{P_{max}} $ ). Thus, using Theorem \ref{thm:BZError},  after $T$ iterations, 
  \begin{align*}
    P(|\hat{\theta}_{T} - \theta^*| > \epsilon) & \leq \frac{p_1 -\epsilon}{\epsilon} \ \left(\frac{P_e}{2\sqrt{P_{max}}} + \frac{1- P_e}{2(1-\sqrt{P_{max}})}\right)^{T} \\
    & \leq \frac{p_1 -\epsilon}{\epsilon} \ \left(\frac{P_{max}}{2\sqrt{P_{max}}} + \frac{1- P_{max}}{2(1-\sqrt{P_{max}})}\right)^{T} \\
    & \leq  \frac{p_1 -\epsilon}{\epsilon} \ \left(\sqrt{P_{max}} + 0.5 \right)^{T}.
  \end{align*}
  Thus, for desired confidence level $\gamma$, $T= \lceil  \log_{0.5 + \sqrt{P_{max}}}(\gamma \epsilon/(p_1 -\epsilon))   \rceil$ suffices. 
\end{proof}
}
 
\section{Additional Experimental Results}\label{sec:app_figs}

\revision{	In addition to the real dataset and one semi-synthetic setup seen in Section~\ref{sec:experiments}, we consider multiple real data and semi-synthetic setups here. In the semi-synthetic setups, we start with two different types of real data sets, one as source of real prices (the Billion Prices dataset) and the second as a source for general data-driven assortments without a compact representation (the transaction logs seen in Section~\ref{sec:experiments}). This section is organized in the following manner: (a) we first describe the Billion Prices dataset, (b) we then discuss the semi-synthetic setups and the differences between them, and (c) we report results on these setups, which complement the results in Section~\ref{sec:experiments}. These results are analogous to the trends observed in Section~\ref{sec:experiments}, and show that our algorithms are scalable with minimal loss in revenues in both the general and capacitated settings.}

\revision{
\noindent\textit{Billion Prices Dataset (BPP)}: As a real world source of prices, we use the publicly available online micro price dataset from the Billion Prices Project~\citep{IAH6Z6_2016} to generate item prices. This dataset contains daily prices for all goods sold by 7 large retailers in Latin America (3 retailers) and the USA (4 retailers) between 2007 to 2010. Among the US retailers, we use pricing data from a supermarket and an electronics retailer to generate our assortment planning instances of varying sizes. The former contains 10 million daily observations for 94,000 items and the latter contains 5 million daily observations for 30,000 items. We use prices from 50 different days when generating  instances for 50 Monte Carlo runs under different settings. 
}

\revision{
\noindent\textit{Semi-synthetic Setups}: The information unavailable in the respective dataset (e.g., purchase information to guide the choice model and frequent itemset estimation with the BPP dataset) is generated synthetically. For instance,
\begin{itemize}
	\item In Section~\ref{sec:experiments}, we have already shown how real world transaction logs were augmented with synthetic prices and utilities (and consequently the MNL model) such that the prices and utilities were negatively correlated. To be specific, the utility parameter of each product is negatively correlated with its price. Specifically, for product $i$,  $v_i = \exp{-(\delta_1 + \delta_2 p_i + G_i )} $, with $\delta_1 = 4.61, \delta_2 = 0.00461 $, and $G_i$ is Gaussian noise with mean 0 and variance 0.1.
	\item In addition to the above, we augment these transaction log datasets with synthetic prices and utilities such that each utility parameter is chosen independently from the uniform distribution $U[0,1]$ and each price is also chosen independently from $U[0,1000]$.
	\item For the BPP dataset, we generate the set of arbitrary feasible/relevant assortment uniformly at random from the set of all assortments, and apply the aforementioned ideas of generating utilities in a correlated as well as independent fashion to complete the instance descriptions.
\end{itemize} 
} 

\subsection{Experiments with General Assortments}

\revision{Here, we consider arbitrary collection of feasible assortments to optimize over. Our first result is in Figure~\ref{fig:gen-ast-real-uni}, where we have used the real world transactions logs but synthetically generated the prices and the utilities independently. This is in contrast with Figure~\ref{fig:gen-ast-real-inv} in Section~\ref{sec:experiments} where these two quantities were negatively correlated. As can be seen, the three proposed methods continue to do much better than the exhaustive search benchmark.}

\begin{figure}[ht]
	\centering
	\includegraphics[width=.48\textwidth]{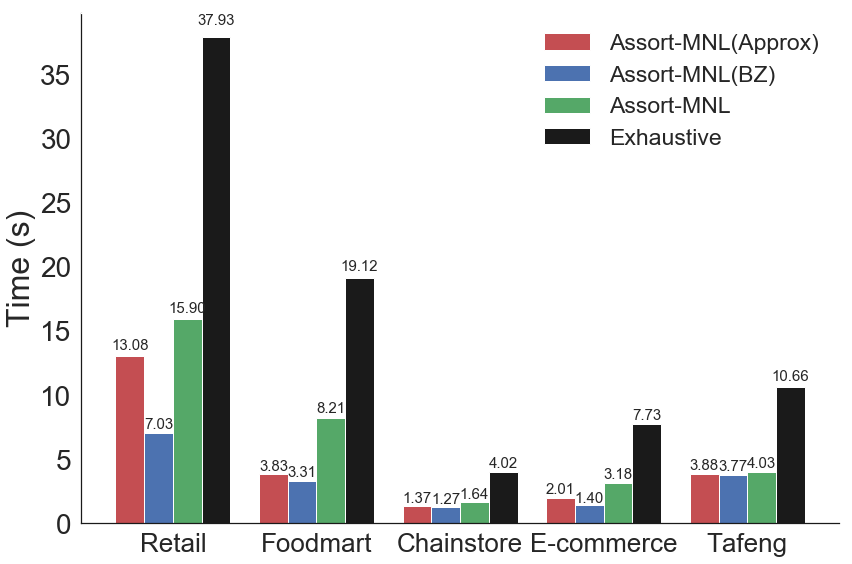}
	\includegraphics[width=.48\textwidth]{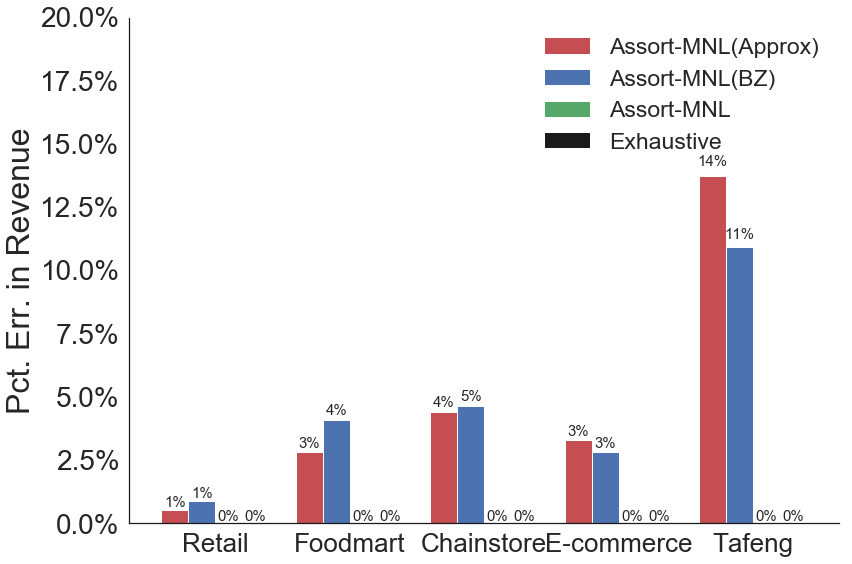}
	\caption{Performance of algorithms over general data-driven instances derived from five different frequent itemset datasets (with prices and utilities generated from a uniform distribution for the semi-synthetic datasets).  \label{fig:gen-ast-real-uni}}
\end{figure}
 
 \revision{Next we study the trends in computation time and relative revenue error as the size of assortments is varied. For this, we use the semi-synthetic instances generated from the BPP dataset as described above. While Figure~\ref{fig:bpp-gen-ast-real-price-uni} captures trends when prices are uncorrelated with utilities, Figure~\ref{fig:bpp-gen-ast-real-price-inv} captures trends when prices are negatively correlated with utilities. As can be observed from both these figures, the three proposed algorithms, namely \ann, \aheu, and \alsh{} continue to perform well across all scales. In particular, the relative revenue errors incurred by our algorithms are almost $5\times$ smaller than the error values incurred on the Ta Feng dataset (see Figure~\ref{fig:tafeng-gen-ast-real-price} in Section~\ref{sec:experiments}).} 

\begin{figure}[ht]
\centering
\includegraphics[width=.48\textwidth]{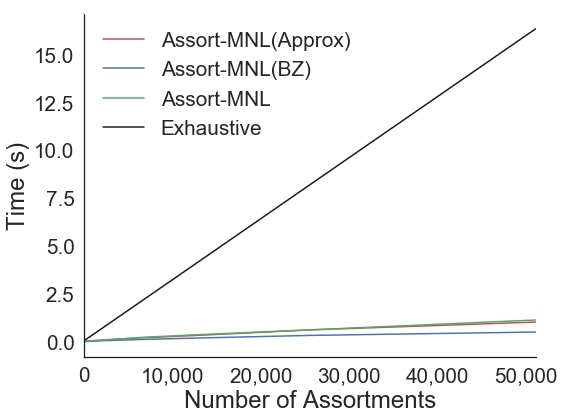}
\includegraphics[width=.48\textwidth]{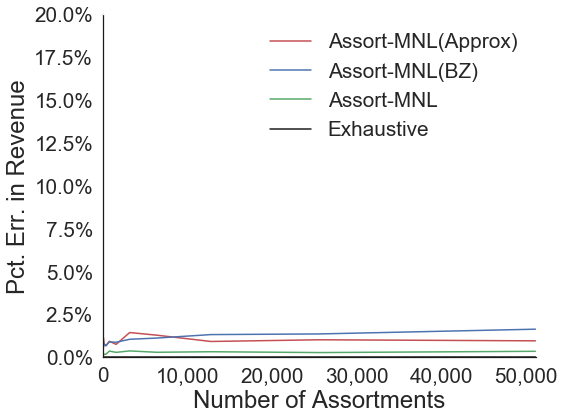}
\caption{Performance of algorithms over general data-driven instances derived from the BPP dataset (with product utilities generated from a uniform distribution). The x-axis corresponds to the number of feasible assortments.
	\label{fig:bpp-gen-ast-real-price-uni}}
\end{figure} 

 \begin{figure}[ht]
	\centering
	\includegraphics[width=.48\textwidth]{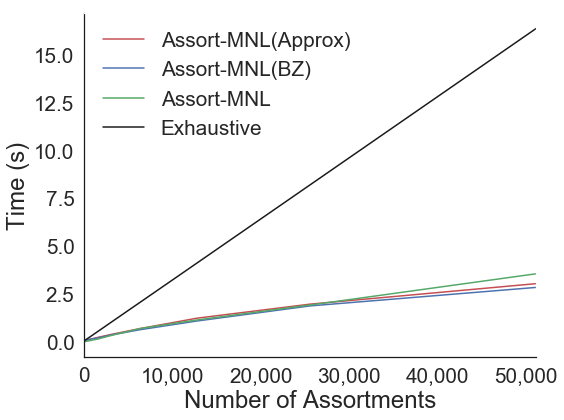}
	\includegraphics[width=.48\textwidth]{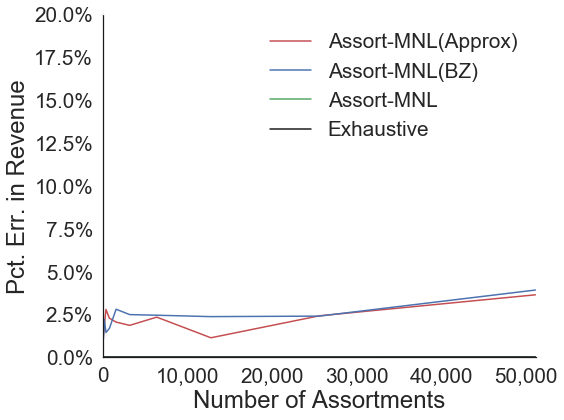}
	\caption{Performance of algorithms over general data-driven instances derived from the BPP dataset (with product utilities being negatively correlated with prices). The x-axis corresponds to the number of feasible assortments.
		\label{fig:bpp-gen-ast-real-price-inv}}
\end{figure}

\subsection{Experiments with Capacitated Assortments}

\revision{
We experiment with the semi-synthetic BPP dataset based instances in the capacitated setting. The experiment conducted here mirrors the one in Section~\ref{sec:experiments}, where the latter uses the real world Ta Feng dataset. As shown in Figures~\ref{fig:bpp-cap-real-price-prod-uni} and \ref{fig:bpp-cap-real-price-prod-inv}, the performance of \ann \ is better by an order of magnitude or more in terms of time complexity while allowing for almost zero loss in relative revenue. In Figure~\ref{fig:bpp-cap-real-price-prod-uni}, the utilities are drawn from an uniform distribution, whereas in Figure~\ref{fig:bpp-cap-real-price-prod-inv}, the utilities are negatively correlated with the prices.}

\begin{figure}[ht]
	\centering
	\includegraphics[width=.48\textwidth]{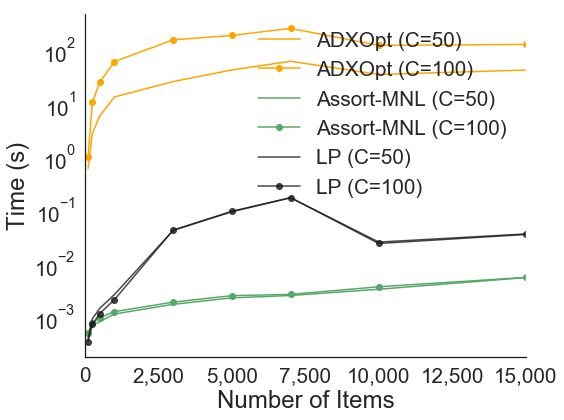}
	\includegraphics[width=.48\textwidth]{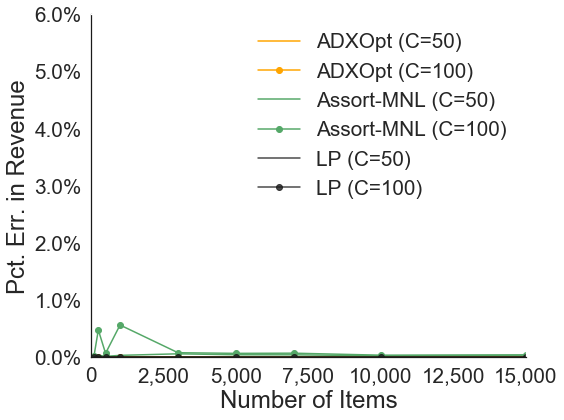}
	\caption{Performance of algorithms in the capacitated setting over instances derived from the BPP dataset  (with product utilities generated from a uniform distribution). The x-axis corresponds to the number of items.
		\label{fig:bpp-cap-real-price-prod-uni}}
\end{figure}

\begin{figure}[ht]
	\centering
	\includegraphics[width=.48\textwidth]{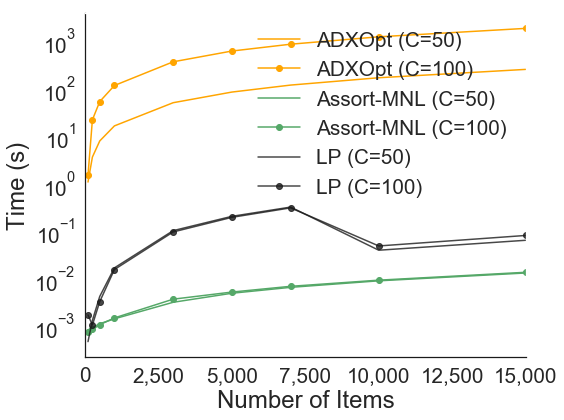}
	\includegraphics[width=.48\textwidth]{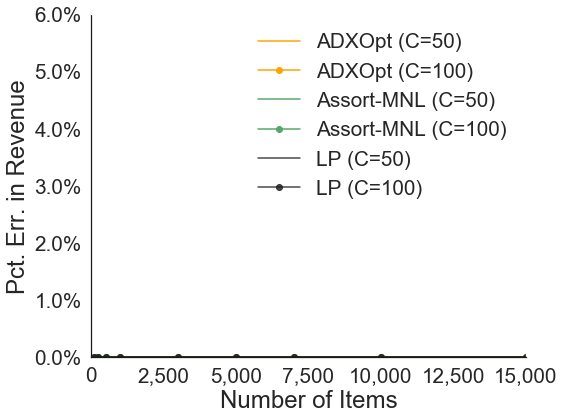}
	\caption{Performance of algorithms in the capacitated setting over instances derived from the BPP dataset  (with product utilities being negatively correlated with prices). The x-axis corresponds to the number of items.
		\label{fig:bpp-cap-real-price-prod-inv}}
\end{figure} 

\revision{
To summarize, the experiments above complement the results on real data reported in Section~\ref{sec:experiments}, and show that the proposed algorithms are indeed capable of optimizing assortments at scale, given arbitrary collections of relevant feasible assortments to optimize over.}

\end{document}